\newtheorem{theorem}{Theorem}
\newtheorem{definition}[theorem]{Definition}
\newtheorem{proposition}[theorem]{Proposition}
\newtheorem{lemma}[theorem]{Lemma}
\newcommand{\eat}[1]{}
\newtcolorbox[list inside=tcb]{mytcb}[1][]{#1}
\icmltitlerunning{Rethinking Graph Neural Networks for Anomaly Detection}
\begin{document}
\setcounter{page}{1}
\twocolumn[
\icmltitle{Rethinking Graph Neural Networks for Anomaly Detection}




\begin{icmlauthorlist}
\icmlauthor{Jianheng Tang}{to1,to2}
\icmlauthor{Jiajin Li}{goo}
\icmlauthor{Ziqi Gao}{to1,to2}
\icmlauthor{Jia Li}{to1,to2}
\end{icmlauthorlist}

\icmlaffiliation{to1}{Hong Kong University of Science and Technology (Guangzhou)}
\icmlaffiliation{to2}{Hong Kong University of Science and Technology}
\icmlaffiliation{goo}{Stanford University}

\icmlcorrespondingauthor{Jia Li}{jialee@ust.hk}
\icmlkeywords{Machine Learning, ICML}

\vskip 0.3in
]

\printAffiliationsAndNotice{} 

\begin{abstract}

Graph Neural Networks (GNNs) are widely applied for graph anomaly detection. As one of the key components for GNN design is to select a tailored spectral filter, we take the first step towards analyzing anomalies via the lens of the graph spectrum. Our crucial observation is the existence of anomalies will lead to the `right-shift' phenomenon, that is, the spectral energy distribution concentrates less on low frequencies and more on high frequencies.  This fact motivates us to propose the Beta Wavelet Graph Neural Network (BWGNN). Indeed, BWGNN has spectral and spatial localized band-pass filters to better handle the `right-shift' phenomenon in anomalies. We demonstrate the effectiveness of BWGNN on four large-scale anomaly detection datasets. Our code and data are released at \url{https://github.com/squareRoot3/Rethinking-Anomaly-Detection}.

\end{abstract}

\section{Introduction}\label{sec1}

An anomaly or an outlier is a data object that deviates significantly from the majority of the objects, as if it was generated by a different mechanism~\cite{textbook}. As a well-established problem, anomaly detection has received much attention due to its vast applicable tasks, e.g., cyber security \cite{anomaly_cyber}, fraud detection \cite{anomaly_financial}, health monitoring \cite{anomaly_health}, device failure detection \cite{sipple2020interpretable}, to name a few.
As graph data becomes ubiquitous in the Web era, graph information often plays a vital role in identifying fraudulent users or activities, e.g., friendship relations in a social network and transaction records on a financial platform. Consequently,  as a crucial research direction, graph-based anomaly detection is necessary to be further explored \cite{noble2003graph, anomaly_survey}.

Nowadays, Graph Neural Networks (GNNs) act as popular approaches for mining structural data, and are naturally applied for the graph anomaly detection task ~\cite{ONE, kumar2018rev2}. 
Unfortunately, the vanilla GNNs are not well-suited for anomaly detection and suffer from the over-smoothing issue~\cite{li2018in, wu2019simplifying}. 
When GNN aggregates information from the node neighborhoods, it also averages the representations of anomalies and makes them less distinguishable. Thus, by intentionally connecting with large amounts of benign neighborhoods, the anomalous nodes may attenuate their suspiciousness, which results in the poor performance of the plain GNNs.

To remedy the issue, several GNN models have been proposed. We categorize the existing methods into three classes. That is, (1) applying attention mechanisms to correlate different neighbors through various views \cite{wang2019semi, cui2020deterrent, liu2021intention}, (2) using the resampling strategy to aggregate neighborhood information selectively \cite{dou2020enhancing, liu2020alleviating, liu2021pick}, (3) designing auxiliary losses to enhance the network training power \cite{dominant, zhao2020error, zhao2021synergistic}.
All these methods analyze the anomaly detection from the graph spatial domain. There are few works that address this problem from the spectral domain. Nevertheless, choosing a tailored spectral filter is a key component of GNN design, as the spectral filter determines the expressive power of GNN \cite{balcilar2020analyzing, he2021bernnet}.

\begin{figure*}[t]
\centering
  \includegraphics[width=1.0\linewidth]{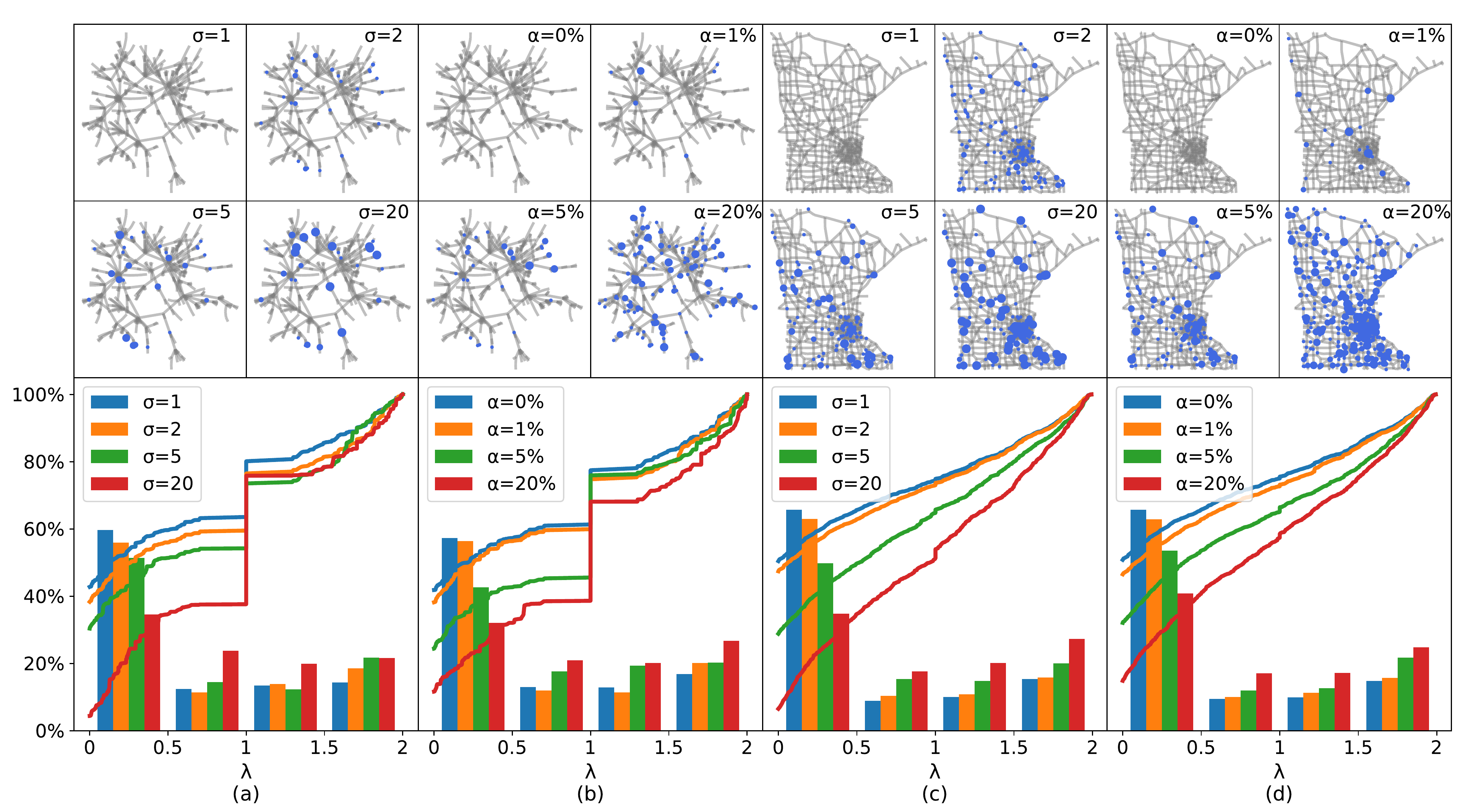}
\caption{The effect of graph anomalies in the spatial domain (top) and spectral domain (bottom) with different anomaly degrees. The cases (a,c) are related to different standard deviation of anomalies (type (i), $\sigma=1,2,5,20$) while the cases (b,d) are about different fraction of anomalies (type (ii),  $\alpha=0\%, 1\%, 5\%, 20\%$).}
\label{fig:intro}
\end{figure*}

To fill the above gap, we want to answer the question in this paper--- \emph{how to choose a tailored spectral filter in GNN for anomaly detection?} We take the first step towards analyzing anomalies via the lens of the graph spectrum (i.e., after the graph Fourier transform of node attributes). From Figure~\ref{fig:intro}, we observe that low-frequency energy is gradually transferred to the high-frequency part, when the degree of anomaly becomes larger. We summarize this phenomenon as the `right-shift' of spectral energy distribution, and further prove it on a Gaussian anomaly model in a rigorous manner. We validate the `right-shift' phenomenon in a variety of graphs with synthetic or real-world anomalies. Through our analysis, we justify the necessity of spectral localized band-pass filters in graph anomaly detection.

Based on these findings, we propose Beta Wavelet Graph Neural Network (BWGNN) to better tackle the `right-shift' phenomenon of graph anomalies. Although the frequency responses of existing works~\cite{wu2021beyond,he2021bernnet,bo2021beyond} can cover nearly all frequency profiles, these GNNs with adaptive filters cannot guarantee to be band-pass and spectral-localized \cite{balcilar2020analyzing}. In fact, the design of frequency response in those works are not tailored to anomaly detection. Therefore, we invoke Hammond's graph wavelet theory \cite{hammond2011wavelets} to develop our new graph neural network architecture. In contrast with the widely used Heat kernels \cite{wavelet_jure, GWNN,gdnli}, the crux of our method is to propose the Beta kernel to  address higher frequency anomalies via multiple flexible, spatial/spectral-localized, and band-pass filters.

To further facilitate this line of research, we release two large-scale real-world datasets as new graph anomaly detection benchmarks, including the T-Finance dataset based on a transaction network and the T-Social dataset based on a social network. Together with them, we conduct extensive experiments on four datasets in both supervised and semi-supervised settings. The proposed BWGNN shows superior performance over widely used graph neural networks and state-of-the-art graph anomaly detection methods.

\section{Spectral Analysis of Graph Anomaly}\label{sec:analysis}

To start with, we provide some necessary preliminaries of graph anomaly detection in Section \ref{sec:analysis1}. In Section \ref{sec:analysis2}, 
we give the theoretical insights of the `right-shift' phenomenon and rigorously prove it on a Gaussian anomaly model. Towards that end, we validate our findings on graphs with synthetic anomalies in Section \ref{sec:analysis3}, and on real-world anomaly detection datasets in Section \ref{sec:analysis4}.

\subsection{Preliminaries}\label{sec:analysis1}
\noindent\textbf{Attributed Graph} We define an attributed graph as $\mathcal G=\{\mathcal V,\mathcal E, \bm X\}$, where $\mathcal V=\{v_1, v_2, \cdots, v_N\}$ is the set of $N$ nodes, $\mathcal E=\{e_{ij}\}$ is the set of edges, and $e_{ij}=(v_i, v_j)$ represents an unweighted edge between nodes $v_i$ and $v_j$. Let $\bm A$ be the corresponding adjacency matrix, $\bm D$ be the degree matrix with $\bm D_{ii}=\sum_j\bm A_{ij}$. Each node $v_i$ has a $d$-dimensional feature vector $\bm X_i \in \mathbb R^d$, and the set of all node features is $\bm X = \{ \bm X_1, \bm X_2, \cdots, \bm X_N \}$. In some applications, $\mathcal G$ is a multi-relational graph and multiple edge sets represent different relations between nodes.

\noindent\textbf{Graph-based Anomaly Detection} \eat{Let $\mathcal G$ be an attributed graph defined above and}Let $\mathcal V_a, \mathcal V_n$ be two disjoint subsets of $\mathcal V (i.e., \mathcal V_a \cap \mathcal V_n = \varnothing)$, where $\mathcal V_a$ represents all the nodes labeled as \textbf{anomalous} and $\mathcal V_n$ represents all \textbf{normal} nodes. Graph-based anomaly detection is to classify unlabeled nodes in $\mathcal G$ into the normal or anomalous categories given the information of the graph structure $\mathcal E$, node features $\bm X$, and partial node labels $\{\mathcal V_a, \mathcal V_n\}$. In this paper, we focus on node anomalies and assume that all edges in $\mathcal G$ are trusted, leaving structural anomalies for future work. Usually, there are far more normal nodes than anomalous nodes $(|\mathcal V_a| << |\mathcal V_n|)$, thus graph-based anomaly detection can be regarded as an imbalanced binary node classification problem. The main difference is that anomaly detection focuses more on the unusual and deviated patterns in the dataset.

\subsection{Theoretical Results}\label{sec:analysis2}

\paragraph{Problem Setup} Let the Laplacian matrix $\bm L$ be defined as $\bm D - \bm A$ (regular) or as $\bm I-\bm D^{-1/2}\bm A\bm D^{-1/2}$ (normalized), where $\bm I$ is an identity matrix. $\bm L$ is a symmetric matrix with eigenvalues, i.e., $0= \lambda_1 \leq \cdots \leq \lambda_N $ and a corresponding orthonormal basis of eigenvectors $\bm U=(\bm u_1, \bm u_2, \cdots, \bm u_N)$. Except for two endpoints $\lambda_1$ and $\lambda_N$, we can split other eigenvalues into the low frequencies $\{\lambda_1,\lambda_2,\cdots,\lambda_k\}$ and high frequencies $\{\lambda_{k+1},\lambda_{k+2},\cdots,\lambda_N \}$ with an arbitrary threshold $\lambda_k$.

Assume that $\bm x = (x_1, x_2,\cdots,x_N)^T \in \mathbb R^N$ is a signal on $\mathcal G$\eat{ (a scalar feature for every node)}, and $\bm{\hat x}=(\hat x_1, \hat x_2,\cdots,\hat x_N)^T=\bm U^T \bm x$ is the graph Fourier transform of $\bm x$. We denote $\hat x_k^2/\Sigma_{i=1}^N \hat x_i^2$ as the \emph{spectral energy distribution} at $\lambda_k (1\leq k \leq N)$. We summarize our theoretical and empirical findings as

\fbox{\begin{minipage}{22.6em}
The existence of anomalies leads to the `right-shift' of spectral energy, which means the spectral energy distribution concentrates less in low frequencies and more in high frequencies.
\end{minipage}}

To further justify the insight,  we prove this interesting finding based on a probabilistic anomaly model \cite{textbook, grubbs1969procedures}. The graph features are assumed to be identically independent drawn from a Gaussian distribution, i.e., $\bm x \sim \mathcal{N}(\mu e_N, \sigma^2 \bm I_N)$, where $\mu e_N$ is an all-the-one vector. Here, the coefficient of variation $\sigma/|\mu|$ can be regarded as the degree of anomalies in $\bm x$, which is indeed a commonly used measure to describe the dispersion of a distribution~\cite{kendall1946advanced, bedeian2000use}. 
When the fraction of anomalies in $\bm x$ increases, $\sigma/|\mu|$ becomes larger and indicates a more considerate degree of anomaly. Alternatively, if the distance between anomalies and the mean vector becomes larger, $\sigma/|\mu|$ also increases, representing a larger degree of anomalies. 

To quantify how the spectral energy distribution changes with respect to the degree of anomalies in $\bm x$, we introduce a metric --- energy ratio as below: 
\begin{definition}[Energy Ratio]\label{def:1} For any $1 \leq k \leq N-1$, we define $k$-th low-frequency energy ratio as the accumulated energy distribution in the first $k$ eigenvalues:
\begin{equation*}
\eta_k(\bm x,\bm L) = \frac{\sum_{i=1}^k \hat x_i^2}{\sum_{i=1}^N \hat x_i^2}. 
\end{equation*}
\end{definition}
A larger $\eta_k$ indicates that a larger part of the energy boils down to the first $k$ eigenvalues. In the following proposition, we shed light on how the degree of anomalies in $\bm x$ will affect $\eta_k$.
\begin{proposition}\label{prop:1}
If $|\mu| \neq 0$ and $\bm L = \bm D - \bm A$, the expectation of the inverse of low-frequency energy ratio $\mathbb{E}_{\bm x}[1/\eta_k(\bm x,\bm L)]$ is monotonically increasing with the anomaly degree $\sigma/|\mu|$. 
\end{proposition}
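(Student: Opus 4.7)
My plan is to diagonalize in the Laplacian basis, reduce the quantity to the reciprocal moment of a non-central chi-square, and then verify monotonicity via the Poisson mixture representation.

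First, since $\bm L = \bm D - \bm A$ satisfies $\bm L \bm 1 = \bm 0$, I can take $\bm u_1 = \bm 1_N/\sqrt N$ as the eigenvector for $\lambda_1 = 0$. Orthogonality of $\bm U$ then makes $\hat{\bm x} = \bm U^T \bm x$ a Gaussian vector with independent components $\hat x_1 \sim \mathcal N(\mu\sqrt N, \sigma^2)$ and $\hat x_i \sim \mathcal N(0,\sigma^2)$ for $i \geq 2$. Set $Y := \sum_{i=1}^k \hat x_i^2$ and $Z := \sum_{i=k+1}^N \hat x_i^2$; these are independent, and
\begin{equation*}
\frac{1}{\eta_k(\bm x, \bm L)} \;=\; 1 + \frac{Z}{Y}, \qquad \mathbb E\bigl[1/\eta_k\bigr] \;=\; 1 + \mathbb E[Z]\,\mathbb E[1/Y].
\end{equation*}
With $t := \sigma/|\mu|$, a scaling argument shows $W := Y/\sigma^2 \sim \chi^2_k(\lambda)$ (non-central chi-square with $k$ degrees of freedom and non-centrality $\lambda := N\mu^2/\sigma^2 = N/t^2$), while $\mathbb E[Z] = (N-k)\sigma^2$; the $\sigma^2$'s cancel and yield
\begin{equation*}
\mathbb E\bigl[1/\eta_k\bigr] \;=\; 1 + (N-k)\,\mathbb E\bigl[1/W\bigr].
\end{equation*}

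It therefore suffices to show that $g(\lambda) := \mathbb E[1/W]$ is strictly decreasing in $\lambda$, since $\lambda = N/t^2$ is itself strictly decreasing in $t$. I would invoke the Poisson mixture $W \mid J \sim \chi^2_{k+2J}$ with $J \sim \text{Poisson}(\lambda/2)$, together with the identity $\mathbb E[1/\chi^2_d] = 1/(d-2)$ for $d \geq 3$, to obtain
\begin{equation*}
g(\lambda) \;=\; \sum_{j=0}^\infty \frac{e^{-\lambda/2}(\lambda/2)^j}{j!}\cdot\frac{1}{k+2j-2}.
\end{equation*}
Because $1/(k+2j-2)$ is strictly decreasing in $j$ while $\text{Poisson}(\lambda/2)$ is stochastically increasing in $\lambda$, $g$ is strictly decreasing. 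Alternatively, termwise differentiation using $\partial_\lambda p_j = \tfrac12(p_{j-1}-p_j)$ and a reindex collapses the derivative to a single telescoping sum
\begin{equation*}
g'(\lambda) \;=\; \tfrac12\sum_{j=0}^\infty \frac{e^{-\lambda/2}(\lambda/2)^j}{j!}\Bigl[\tfrac{1}{k+2j}-\tfrac{1}{k+2j-2}\Bigr] \;<\; 0,
\end{equation*}
which is manifestly negative term by term.

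The main obstacle is precisely this last step: the density of $\chi^2_k(\lambda)$ involves a modified Bessel function and is inconvenient to differentiate directly in $\lambda$, but the Poisson mixture sidesteps that entirely and turns the monotonicity into an elementary comparison of $1/(k+2j-2)$ values. A secondary subtlety is integrability --- $\mathbb E[1/W]$ is finite only for $k \geq 3$ (since $\mathbb E[1/\chi^2_d] = \infty$ for $d \leq 2$), so the argument implicitly requires that the low-frequency block is not too small; otherwise the statement must be interpreted in the sense that $\mathbb E[1/\eta_k] = +\infty$ independently of $\sigma/|\mu|$, which I would flag as a remark.
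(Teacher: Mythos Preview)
Your proof is correct and shares the paper's opening reduction: orthogonal rotation to the Laplacian basis, independence of the numerator and denominator blocks, and the identification $\mathbb{E}[1/\eta_k] = 1 + (N-k)\,\mathbb{E}[1/W]$ with $W$ a non-central chi-square. Where you diverge is in the monotonicity step. The paper conditions on the central part $w = \sum_{i=2}^k z_i^2 \sim \chi^2_{k-1}$ and studies the one-dimensional integral $f(\rho) = \int e^{-(t-\rho)^2/2}/(t^2+w)\,dt$, arguing via preservation of log-concavity under products and marginalization that $f$ is log-concave with its maximum at $\rho=0$, hence decreasing in $|\rho| = \sqrt{N}\,|\mu|/\sigma$. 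You instead invoke the Poisson-mixture representation of the non-central chi-square and reduce everything to the elementary observation that $j \mapsto 1/(k+2j-2)$ is strictly decreasing while $\text{Poisson}(\lambda/2)$ is stochastically increasing in $\lambda$. Your route is more explicit---it yields a closed series for $\mathbb{E}[1/W]$, makes the strict monotonicity immediate, and exposes the integrability constraint $k \geq 3$ that the paper leaves implicit---whereas the paper's argument is more structural and does not need the inverse-moment identity $\mathbb{E}[1/\chi^2_d] = 1/(d-2)$. Your closing remark that $\mathbb{E}[1/\eta_k] = +\infty$ for $k \leq 2$ is a genuine caveat worth keeping.
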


Proposition \ref{prop:1} indicates that an increased degree of anomaly enforces the spectral energy distribution to concentrate less on the low-frequency eigenvalues. The proof details can be found in Appendix \ref{sec:append1}.

Unfortunately, the calculation of spectral energy ratio requires the eigen-decomposition of the graph Laplacian, which is time-consuming on large-scale graphs. To navigate such pitfalls, we introduce a more computational amenable metric to generally measure the effect of graph anomalies in the spectral domain.

\begin{definition}[High-frequency Area] \label{def:2}
Suppose that the low-frequency energy ratio curve $f(t)$ is defined as $f(t)=\eta_k(x,\bm L)$ where $t \in [\lambda_{k}, \lambda_{k+1})$ and $1 \leq k \leq N-1$. The area between $f(t)$ and $g(t)=1$ is defined as the high-frequency area: $S_{\textnormal{high}}=\int_0^{\lambda_N}1-f(t)dt$.
\end{definition}

Without the need of eigen-decomposition, $S_{\text{high}}$ can be computed by simple elementary manipulations:
\begin{equation}
\label{eq:high_freq}
S_{\text{high}}=\frac{\sum_{k=1}^N\lambda_k \hat x^2_k}{\sum_{k=1}^N \hat x_k^2}=\frac{\bm x^T\bm L \bm x}{\bm x^T\bm x}.
\end{equation}
We refer the reader to Appendix \ref{sec:append2} for further details. According to \eqref{eq:high_freq}, spectral energy on low frequencies contributes less to $S_{\text{high}}$ after multiplying a small eigenvalue. For example, $S_{\text{high}}=0$ if the entire spectral energy concentrates on $\lambda_1=0$, and $S_{\text{high}}$ increases when spectral energy shifts to larger eigenvalues. Therefore, we can use the change of $S_{\text{high}}$ to describe the `right-shift' phenomenon in the whole spectrum. 

Furthermore, \eqref{eq:high_freq} reveals that the energy distribution of $\bm x$ in the spectral domain is closely related to the smoothness of $\bm x$ in the spatial domain --- $\bm x^T\bm L \bm x$ will be smaller if signal $\bm x$ has closer values between connected nodes, which also reflects a smaller anomaly degree. In Appendix \ref{sec:append1}, we additionally proof that $S_{\text{high}}$ is monotonically increasing with the anomaly degree $\sigma/|\mu|$, which matches the result in Proposition \ref{prop:1}.

\begin{figure}[t]

\centering
  \includegraphics[width=1.0\columnwidth]{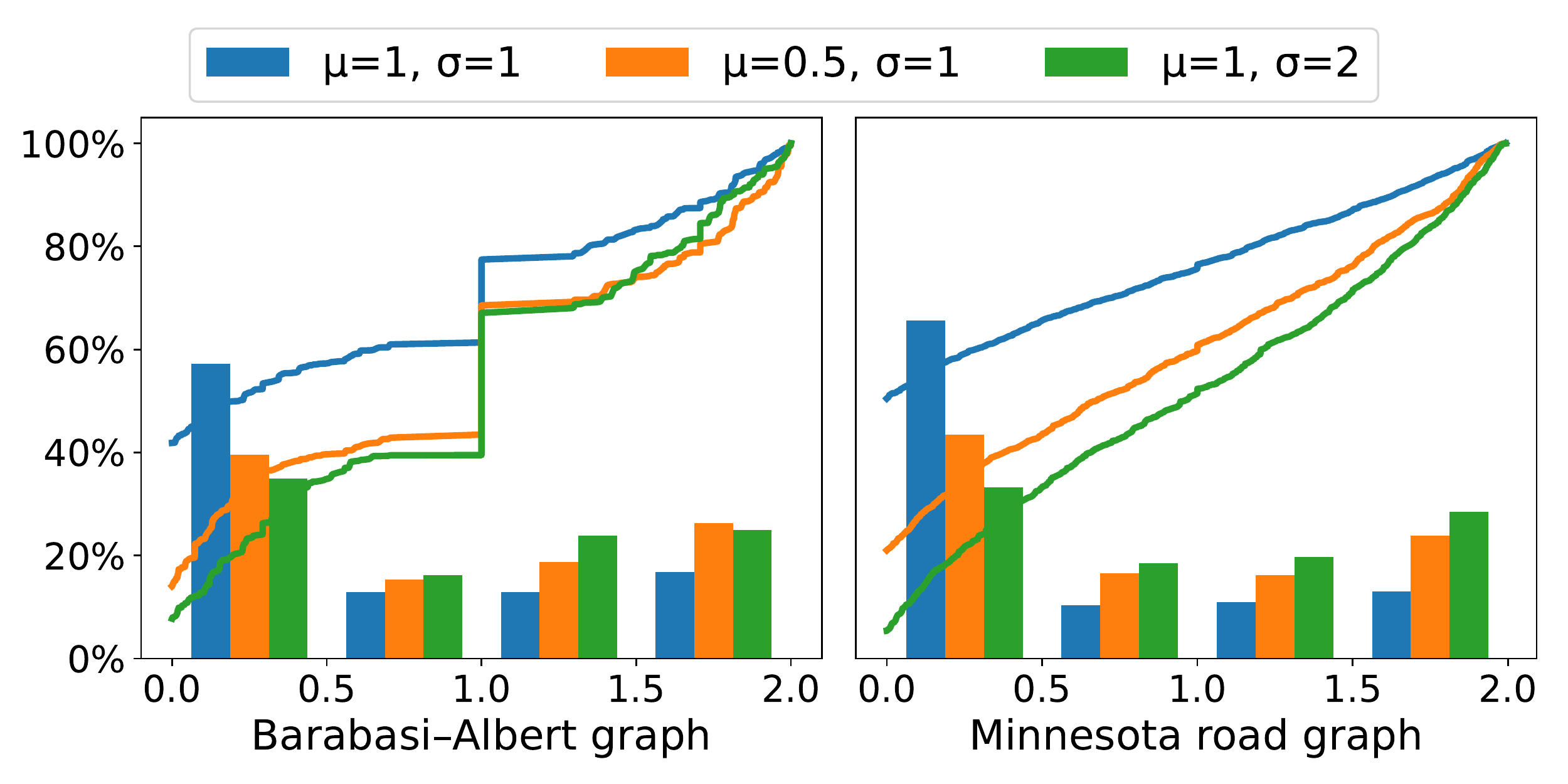}
\vspace{-4mm}
\caption{Spectral analysis of graph signal $\bm x \sim \mathcal{N}(\mu e_N, \sigma^2 \bm I_N)$.}
\label{fig:analysis}
\end{figure}

\begin{table*}[t!]
  \caption{Summary of dataset statistics and the comparison of anomaly effect on spectral energy distributions. The number in the parentheses is the lowest value of $\Delta S_{\text{high}}$ on each dataset.}
  \vskip 0.1in

\centering
\begin{tabular}{r|cccc|cc}
\toprule
              & \multicolumn{4}{c|}{Statistics}                  & \multicolumn{2}{c}{Anomaly Effect ($\Delta S_{\text{high}}$) }                                               \\
Dataset       & \# Nodes   & \# Edges    & Anomaly(\%) & \# Features & Drop-Anomaly & Drop-Random \\\midrule
Amazon        & 11,944    & 4,398,392  & 6.87\%      & 25       & \textbf{-0.59\% (-4.19\%) }& 0.15\% (-0.39\%)                                          \\
YelpChi       & 45,954    & 3,846,979  & 14.53\%     & 32       &  \textbf{-1.61\% (-15.4\%)} & -0.04\% (-1.14\%)                                      \\
T-Finance & 39,357    & 21,222,543 & 4.58\%      & 10       &  \textbf{-0.34\% (-0.92\%)} & 0.09\% (-0.01\%)                                          \\
T-Social      & 5,781,065 & 73,105,508 & 3.01\%      & 10       &  \textbf{-5.36\% (-13.2\%)} & 0.08\% (-0.02\%)                                          \\
\bottomrule
\end{tabular}

  \label{tab:data}
\end{table*}

\subsection{Validation on Synthetic Anomalies}\label{sec:analysis3}

To present our theoretical findings in a more intuitive way, we demonstrate it on different graphs with synthetic anomalies. In Figure~\ref{fig:intro}, we study the effect of anomalies on two types of graph: a Barabási–Albert graph with 500 nodes on Figure \ref{fig:intro} (a)-(b) and a Minnesota road graph with 2,642 nodes \cite{perraudin2014gspbox} in Figure \ref{fig:intro} (c)-(d).

We first assume normal and anomalous nodes follow different distributions to better visualize the change of spectral energy under different anomaly degrees. For both graphs, the one-dimensional feature of each normal node is drawn from $\mathcal{N}(1, 1)$, while the anomalous one is drawn from $\mathcal{N}(1, \sigma^2)$ (i.e., $\sigma >1$). We analyze two variations of anomalies: (i) the fraction of anomalies is fixed to 5\%, and the standard deviation of anomalies changes (i.e., $\sigma=1,2,5,20$); and (ii) the standard deviation of anomalies is fixed to 5 and the fraction of anomalies changes (i.e., $\alpha=0\%, 1\%, 5\%, 20\%$). In the top half of Figure \ref{fig:intro}, we use blue circles to represent anomalous nodes in the spatial domain. Bigger blue nodes indicate a larger degree of anomaly. In the bottom half of Figure \ref{fig:intro}, we display the energy distribution of $\bm x$ in the spectral domain with various anomaly degrees. The colored histograms reflect the proportion of spectral energy in different eigenvalue intervals such as $[0,0.5)$, and the curves represent $f(t)$ in Definition \ref{def:2}.

We take the Barabási–Albert graph as an example. When the fraction of anomalies is 0\%, most of the energy (60\%) locates in the low frequency region $(\lambda < 0.5)$.  By contrast, when the anomaly degree becomes larger --- increasing $\sigma$ and $\alpha$, the ratio of spectral energy for $\lambda > 0.5$ becomes larger in the histogram. Also, the low-frequency energy ratio curve is almost monotonically decreasing on the interval $\lambda \in [0,2]$. The observation also holds for the Minnesota road graph. 

Moreover, we further assume the features of all nodes are drawn from a single Gaussian distribution which strictly follows Proposition \ref{prop:1}. We corroborate our theoretical findings on three cases: (1) the original features $\mu=1,\sigma=1$, (2) features with a smaller mean value $\mu=0.5,\sigma=1$, and (3) features with a larger variance $\mu=1,\sigma=2$. In Figure \ref{fig:analysis}, both increasing the variance and decreasing the mean value lead to the `right-shift' of spectral energy distributions, which is consistent with the result in Proposition \ref{prop:1}.

\subsection{Validation on Real-world Anomalies}\label{sec:analysis4}

\begin{figure}[t]

\centering
  \includegraphics[width=1.0\columnwidth]{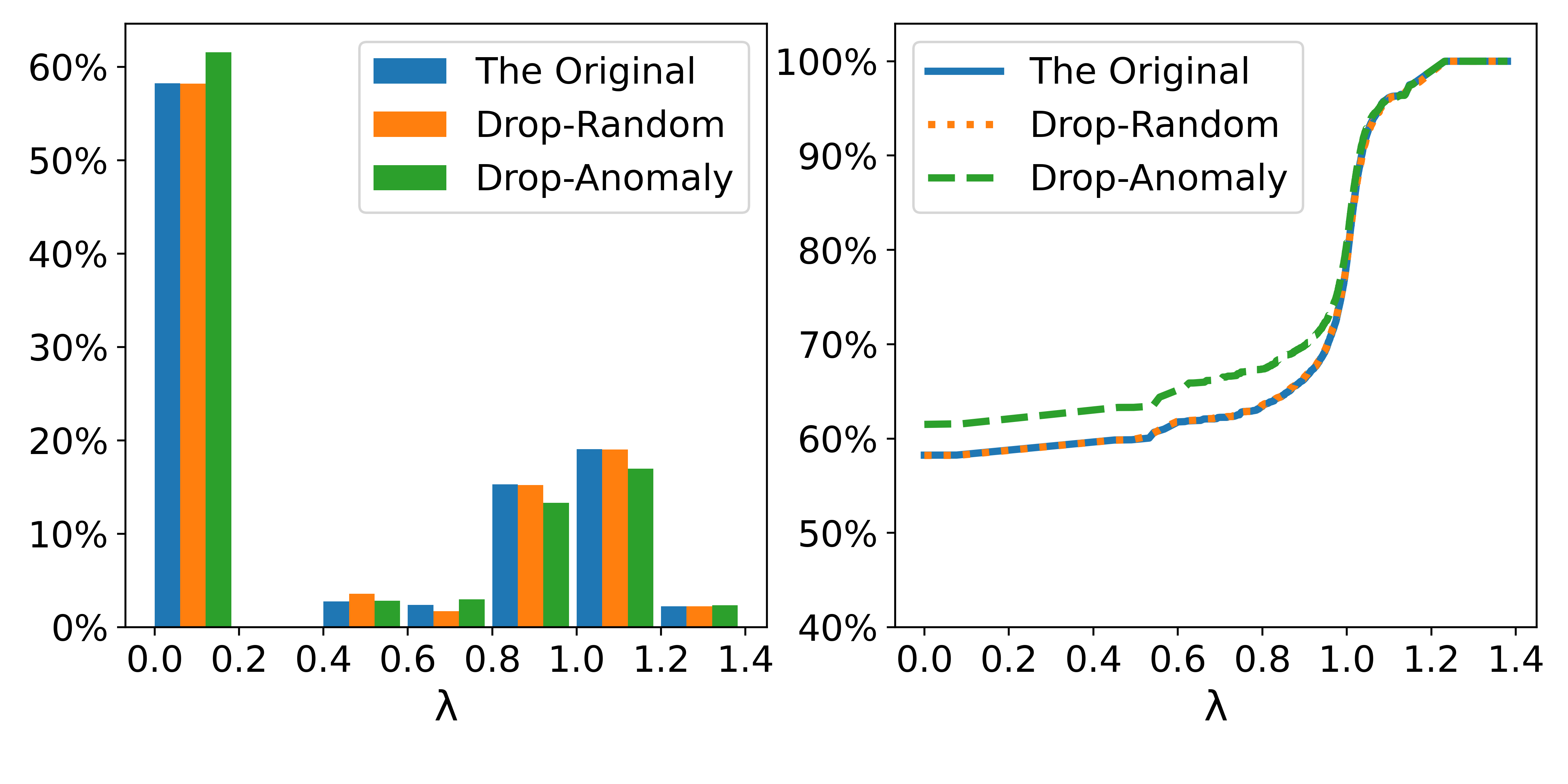}
\vspace{-9mm}
\caption{
Comparison of the spectral energy distribution (left) and the energy ratio curve (right) between the original graph and two perturbed graphs in the Amazon dataset.
}
\label{fig:dataset}
\end{figure}

In real-world anomaly detection datasets, the node features may not strictly follow the Gaussian distribution. Nevertheless, we verify the generality of `right-shift' on the real case. We introduce four industry datasets for different anomaly detection scenarios.  We choose two widely used datasets in previous works \cite{liu2021pick, liu2020alleviating}, including the \textbf{Amazon} dataset~\cite{mcauley2013amateurs} for user anomaly detection and the \textbf{YelpChi} dataset~\cite{rayana2015collective} for review anomaly detection.  We further construct two large-scale real-world datasets as new graph anomaly detection benchmarks, including the \textbf{T-Finance} dataset based on a transaction network and the \textbf{T-Social} dataset based on a social network. The statistics of these datasets are summarized in Table \ref{tab:data}. More details about these datasets are in Section \ref{sec:exp1}.

We first study the `right-shift' phenomenon of anomalies in the Amazon dataset. We choose the last feature dimension and visualize the spectral energy in three views: (1) the origin graph, (2) the perturbed graph by dropping all anomalies, and (3) the perturbed graph by dropping the same number of random nodes. In Figure \ref{fig:dataset}, compared with the origin graph, the spectral energy in the subgraph without anomalies concentrates more in low frequencies (i.e., $\lambda \in [0,0.2)$) and less in high frequencies (i.e., $\lambda \in [0.8,1.2)$). By contrast, dropping random nodes does not have the same effect. The `right-shift' phenomenon in this case indicates that the frequency band around $\lambda=1$ has a strong connection with anomalies.

As the eigen-decomposition operation suffers from the heavy computational burden, especially on the other three large-scale datasets, we quantify the spectral effect by the high-frequency area introduced in Definition \ref{def:2}. We compare the original graph with two perturbed graphs. The relative change is calculated by $\Delta S_{\text{high}} = (\hat S_{\text{high}}-S_{\text{high}})/S_{\text{high}}$, in which $S_{\text{high}}$ and  $\hat S_{\text{high}}$ denote the high-frequency area of the original graph and that of two perturbed graphs respectively.

We compute $\Delta S_{\text{high}}$ for each feature dimension and report the mean and lowest values in Table \ref{tab:data}. Among all datasets, removing anomalies leads to the decrease of $S_{\text{high}}$, while dropping nodes randomly has limited effects on $S_{\text{high}}$. More specifically, on T-Social and Yelpchi, $S_{\text{high}}$ decreases more than 10\% for the worst-case feature, which is impressive given that anomalies are few.

\section{Methodology}\label{sec:method}

The analysis in Section \ref{sec:analysis} shows that we need to focus on `right-shift' effect when detecting graph anomalies. Unfortunately, most of the current GNNs are low-pass filters~\cite{nt2019revisiting, wu2019simplifying} or adaptive filters \cite{defferrard2016convolutional,he2021bernnet,dong2021adagnn,levie2018cayleynets} that are neither guaranteed to be band-pass nor spectral-localized \cite{balcilar2020analyzing}. Since high-frequency anomalies account for only a small fraction and most spectral energy still concentrates on low frequencies, these adaptive GNNs may degrade to low-pass filters in this task. 

To overcome this drawback, we propose our new graph neural network architecture based on Hammond's graph wavelet theory~\cite{hammond2011wavelets}, which is band-pass in nature and can better address the `right-shift' effect inheriting from anomalies. We first introduce the backgrounds of graph wavelet in Section \ref{sec:method1}. Then, we propose our graph Beta wavelet and demonstrate its good properties in Section \ref{sec:method2}. Based on Beta wavelets, we construct the Beta Wavelet Graph Neural Network (BWGNN) in Section \ref{sec:method3}. We discuss the differences between BWGNN and other related graph wavelets in Section \ref{sec:method4}.
\subsection{Background: Hammond's Graph Wavelet}\label{sec:method1}
The graph wavelet transform defined in \cite{hammond2011wavelets} starts with a "mother" wavelet $\psi$ and employs a group of wavelets as bases, defined as $\mathcal W=(\mathcal W_{\psi_1}, \mathcal W_{\psi_2}, \cdots)$. Formally, applying $\mathcal W_{\psi_i}$ on a graph signal $x \in \mathbb R^N$ can be written as
\begin{equation}\label{eq:method_def}
    \mathcal W_{\psi_i}(\bm x) = \bm Ug_i(\bm \Lambda)\bm U^T \bm x,
\end{equation}
where $g_i(\cdot)$ is a kernel function in the spectral domain defined on $[0, \lambda_N]$, and $g_i(\bm \Lambda)=\text{diag}(g_i(\lambda))$. Although Equation \eqref{eq:method_def} is similar to the graph spectral convolution derived from Fourier transform, the kernel function $g_i$ in Hammond's graph wavelet transform should satisfy the following additional requirements.

\begin{itemize}[itemsep=2pt,topsep=0pt,parsep=0pt]
    \item According to the Parseval theorem, the wavelet transform needs to meet the admissibility condition:
    $$ \small \int_0^\infty {\frac{|g_i(w)|^2}{w}} dw = C_{g} < \infty,$$
    which means $g_i$ should satisfy $g_i(0) = g_i(\infty) = 0$ and perform like a band-pass filter in the spectral domain. 
    \item The wavelet transform covers different frequency bands via band-pass filters of different scales $\{g_1, g_2, ...\}$. 
\end{itemize}
To avoid the eigen-decomposition of the graph Laplacian $\bm L$, the kernel function $g_i$ has to be a polynomial function, i.e., $\bm Ug_i(\Lambda)\bm U^T = g_i(\bm L)$ in most of the literature.

\subsection{Beta Wavelet on Graph}\label{sec:method2}

Beta distribution often serves as a wavelet basis \cite{de2015compactly} in computer vision applications \cite{amar2005beta, jemai2010fbwn, eladel2016fast}, but has not been utilized for mining graph data yet. Here we choose the Beta distribution as the graph kernel function and demonstrate that it meets the requirements of Hammond's graph wavelet.

The probability density function of Beta distribution admits:
\begin{equation*}
\beta_{p,q}(w)=\left\{
\begin{array}{ll}
\frac{1}{B(p+1, q+1)}w^p(1-w)^q & \text{if} \ \  w \in [0,1]\\
0 & \text{otherwise}\\
\end{array}\right.
\end{equation*}
where $p, q \in \mathbb R^+$ and $B(p+1,q+1)=p!q!/(p+q+1)!$ is a constant. As the eigenvalues of the normalized graph Laplacian $L$ satisfy $\lambda \in [0,2]$, we adopt $\beta_{p,q}^*(w)=\frac{1}{2}\beta_{p,q}(\frac{w}{2})$ to cover the complete spectral range of $L$. We further add the restrictions $p, q \in \mathbb N^+$ to ensure $\beta^*(p,q)$ is a polynomial, such that fast computation can be conducted. Thus, our Beta wavelet transform $\mathcal W_{p,q}$ can be written as: 
\begin{equation*}
\mathcal W_{p,q}=\bm U\beta^*_{p,q}(\bm \Lambda)\bm U^T=\beta^*_{p,q}(\bm L)=\frac{(\frac{\bm L}{2})^p(I-\frac{\bm L}{2})^q}{2B(p+1,q+1)}.
\end{equation*}
Let $p+q=C$ be a constant and our Beta wavelet transform $\mathcal W$ is constructed by a group of $C+1$ Beta wavelets with the same order: 
\begin{equation}\label{eq:method_wavelets}
    \mathcal W = (\mathcal W_{0,C}, \mathcal W_{1,C-1}, \cdots, \mathcal W_{C,0}).
\end{equation}
In this equation, $\mathcal W_{0,C}$ is a low-pass filter and others are band-pass filters of different scales. Besides, when $p>0$, the kernel function $\beta^*_{p,q}$ satisfies: 
\begin{equation*}
\int_0^\infty {\frac{|\beta^*_{p,q}(w)|^2}{w}} dw \leq \int_0^2\frac{dw}{2B(p+1, q+1)}<\infty.
\end{equation*}
Therefore, the proposed Beta wavelet transform $\mathcal W$ satisfies both two requirements of Hammond's graph wavelet in Section \ref{sec:analysis1}.

\begin{figure}[t]
\centering
  \includegraphics[width=1.0\linewidth]{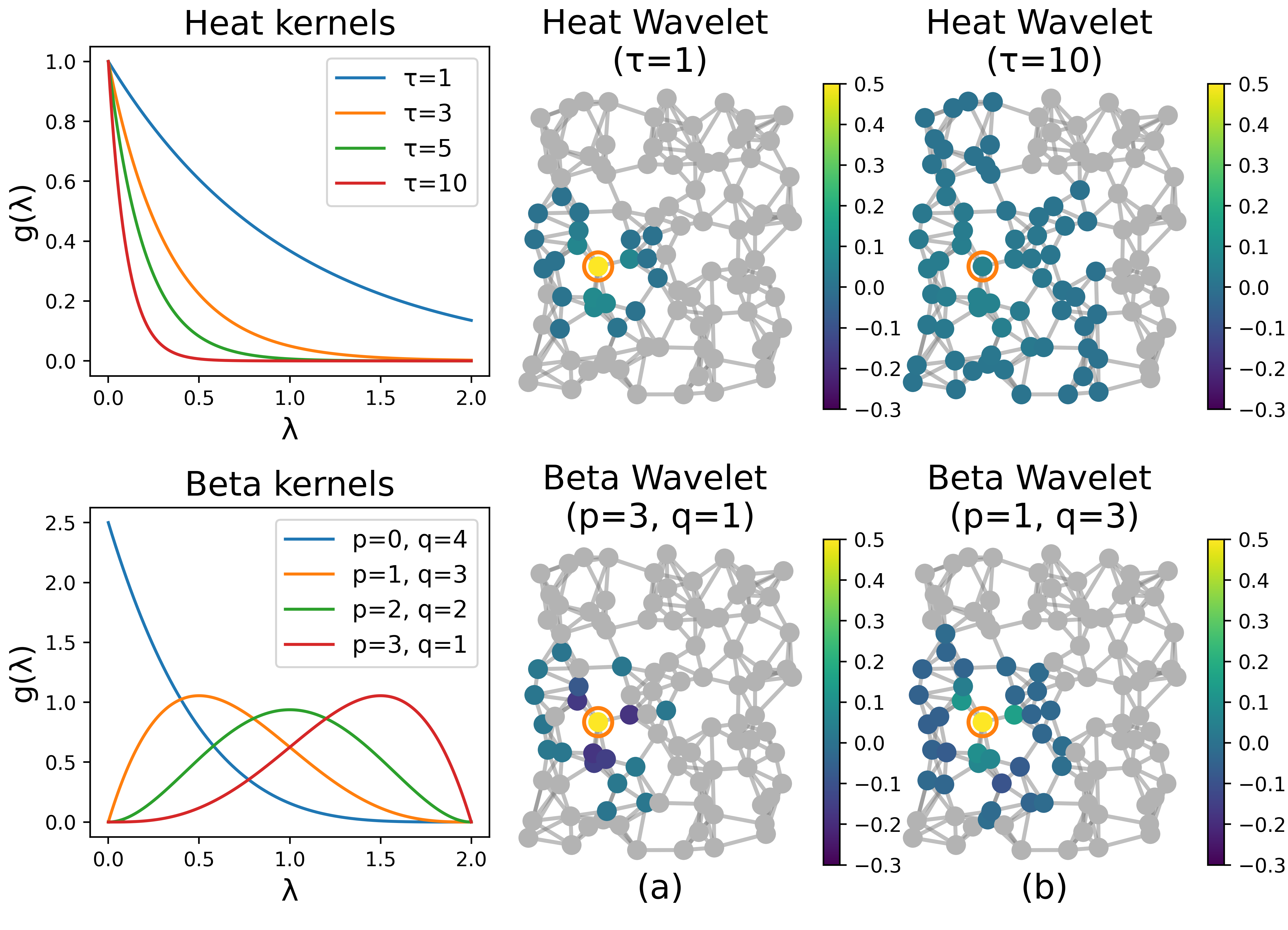}
  \vspace{-7mm}
\caption{Comparison between Heat wavelets and Beta wavelets in the spectral domain (left) and the spatial domain (right).}
\label{fig:model1}
\end{figure}

Figure \ref{fig:model1} compares a group of the proposed Beta wavelets ($C=4$) with the widely-used wavelet with Heat kernels (i.e., $g(\lambda)=e^{-\tau \lambda}, \tau \in \{1,3,5,10\}$). The first column shows that Heat kernels are all low-pass filters in the spectral domain, while Beta kernels contain \emph{various filter types} including low-pass and band-pass. \eat{For any given frequency band, a certain function can be found in the kernel banks to capture corresponding signals. }

The right two columns in Figure \ref{fig:model1} visualize the effect of graph wavelet transform on a randomly generated kNN graph. The responses with an absolute value less than 0.001 are marked as grey. For different scales, responses of heat wavelets are all positive, while those of Beta wavelets can be positive or negative in different channels. The diversified responses of the Beta graph wavelet transform show that it can capture not only the similarities but also the differences of nodes within a subgraph, thus a better \emph{neighborhood flexibility}. This property can make the representations of anomalies more distinguishable.

The classic wavelet transform is well-known for its locality in time and frequency domains. As an analogy, we present two propositions about the \emph{good locality} of Beta graph wavelets in spectral and spatial domains:

\begin{proposition}[Spectral Locality] \label{prop:3}
Consider $p > 0$, $q > 0$ and $X\sim \beta^*_{p,q}$ where $\beta^*_{p,q}$ is band-pass, the mean and variance of $X$ are:
\begin{align*}
    \mu=&\mathbb{E}(X)=\frac{2(p+1)}{p+q+2}\\
    \sigma=&\textnormal{Var}(X)=\frac{4(p+1)(q+1)}{(p+q+2)^2(p+q+3)}.
\end{align*}
When  $p+q \rightarrow \infty$ satisfies $p=cq$, we have $\sigma \rightarrow 0$ and $\mu=\frac{2c}{c+1}$ can be any number between $(0,2)$. 
\end{proposition}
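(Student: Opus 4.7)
The plan is to reduce everything to known moments of the standard Beta distribution, then push through the scaling and the $p=cq$ limit. The density $\beta_{p,q}(w)=\frac{1}{B(p+1,q+1)}w^p(1-w)^q$ on $[0,1]$ is exactly the $\mathrm{Beta}(p+1,q+1)$ density (the usual convention with shape parameters $\alpha=p+1$, $\beta=q+1$). Since $\beta^*_{p,q}(w)=\tfrac{1}{2}\beta_{p,q}(w/2)$ is a change of variables corresponding to the map $Y\mapsto 2Y$, a random variable $X\sim\beta^*_{p,q}$ is distributed as $2Y$ with $Y\sim\mathrm{Beta}(p+1,q+1)$.

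From here I would just quote the textbook moments of $Y$: $\mathbb{E}(Y)=(p+1)/(p+q+2)$ and $\mathrm{Var}(Y)=(p+1)(q+1)/\bigl((p+q+2)^2(p+q+3)\bigr)$. These come from the standard integrals $\int_0^1 w^a(1-w)^b\,dw = B(a+1,b+1)$ together with the identity $B(a,b)=\Gamma(a)\Gamma(b)/\Gamma(a+b)$, so I would either cite them or sketch the one-line derivation. Then linearity and homogeneity give $\mu=\mathbb{E}(X)=2\mathbb{E}(Y)$ and $\sigma=\mathrm{Var}(X)=4\mathrm{Var}(Y)$, which are precisely the claimed formulas.

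For the locality statement I would substitute $p=cq$ into both expressions and take $q\to\infty$. The mean becomes
\begin{equation*}
\mu=\frac{2(cq+1)}{(c+1)q+2}\xrightarrow[q\to\infty]{}\frac{2c}{c+1},
\end{equation*}
and since $c\mapsto 2c/(c+1)$ is a continuous bijection from $(0,\infty)$ onto $(0,2)$, any target mean in $(0,2)$ is attainable. For the variance I would extract the leading order: the numerator $4(cq+1)(q+1)$ is $\Theta(q^2)$ while the denominator $((c+1)q+2)^2((c+1)q+3)$ is $\Theta(q^3)$, hence $\sigma=\Theta(1/q)\to 0$. This concentration is what the paper calls spectral locality, since the Beta kernel is supported on $[0,2]$ and becomes sharply peaked around $2c/(c+1)$.

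I do not anticipate a serious obstacle: the whole argument is an application of standard Beta moments plus a routine asymptotic. The only minor care point is the indexing convention (shape parameters $p+1,q+1$ rather than $p,q$) and the factor of $2$ from the rescaling onto the spectral interval $[0,2]$; getting these bookkeeping pieces consistent with the formulas in the statement is the one place where an off-by-one or off-by-two slip could creep in, so I would write the change of variables explicitly before quoting any moment formula.
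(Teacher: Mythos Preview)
Your proposal is correct and matches the paper's own treatment: the paper simply states that the proposition ``can be directly derived from the properties of the Beta distribution'' without further detail, and your argument—identifying $X$ as $2Y$ with $Y\sim\mathrm{Beta}(p+1,q+1)$, quoting the standard mean and variance, and checking the $p=cq$ asymptotics—is exactly the intended derivation, spelled out more fully than the paper does.
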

Here, we use $p=cq$ to ensure $p$ and $q$ do not deviate dramatically. Proposition \ref{prop:3} can be directly derived from the properties of the Beta distribution. It shows $\beta^*_{p,q}$ can concentrate on any $\mu \in (0,2)$, which means Beta graph wavelets can be tailored to any specific frequency band for anomaly detection. 

\begin{proposition}[Spatial Locality] \label{prop:4}
Let $v_i, v_j$ be two nodes on $\mathcal G$, $\mathcal W_{p,q}\delta_i[j]$ be the effect of a one-hot signal $\delta_i \in \mathbb R^N$ on node $v_j$ after the wavelet transform. $\mathcal W_{p,q}\delta_i$ is localized in ($p+q$)-hops of node $v_i$. When the distance $d_{\mathcal{G}}(v_i,v_j)>p+q$, we have $\mathcal W_{p,q}\delta_i[j]=0$.
\end{proposition}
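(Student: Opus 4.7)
The plan is to exploit the polynomial structure of $\beta^*_{p,q}$. Recall from Section \ref{sec:method2} that, since we restrict to $p,q \in \mathbb N^+$, the kernel
\[
\beta^*_{p,q}(w) \;=\; \frac{1}{2B(p+1,q+1)}\,\bigl(\tfrac{w}{2}\bigr)^{p}\bigl(1-\tfrac{w}{2}\bigr)^{q}
\]
is a polynomial of degree exactly $p+q$ in $w$. Hence $\mathcal W_{p,q} = \beta^*_{p,q}(\bm L)$ is a polynomial in $\bm L$ of degree $p+q$, and can be written as $\sum_{k=0}^{p+q} c_k\,\bm L^{k}$ for some scalars $c_k$. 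The proposition will therefore follow once we show the standard locality property: $[\bm L^{k}]_{ij}=0$ whenever $d_{\mathcal G}(v_i,v_j) > k$.

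I would prove the locality of $\bm L^k$ by induction on $k$. The base case $k=0$ is immediate since $\bm L^{0}=\bm I$ is diagonal. For the inductive step, note that the sparsity pattern of $\bm L$ (either $\bm D-\bm A$ or $\bm I-\bm D^{-1/2}\bm A\bm D^{-1/2}$) satisfies $[\bm L]_{\ell j}\neq 0$ only if $\ell=j$ or $(v_\ell,v_j)\in\mathcal E$, i.e., $d_{\mathcal G}(v_\ell,v_j)\leq 1$. Writing $[\bm L^{k+1}]_{ij} = \sum_{\ell}[\bm L^{k}]_{i\ell}\,[\bm L]_{\ell j}$, suppose $d_{\mathcal G}(v_i,v_j) > k+1$. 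Then for every $\ell$ with $[\bm L]_{\ell j}\neq 0$ we have $d_{\mathcal G}(v_i,v_\ell)\geq d_{\mathcal G}(v_i,v_j)-d_{\mathcal G}(v_\ell,v_j) > k$, so by the inductive hypothesis $[\bm L^{k}]_{i\ell}=0$. Every summand vanishes, completing the induction.

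Combining the two steps: if $d_{\mathcal G}(v_i,v_j) > p+q$, then $[\bm L^{k}]_{ij}=0$ for all $0\leq k\leq p+q$, hence
\[
\mathcal W_{p,q}\delta_i[j] \;=\; [\mathcal W_{p,q}]_{ji} \;=\; \sum_{k=0}^{p+q} c_k\,[\bm L^{k}]_{ji} \;=\; 0,
\]
using symmetry of $\bm L$. This gives the claimed $(p+q)$-hop spatial localization.

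There is no real obstacle here; the argument is essentially a one-line consequence of the fact that polynomials of degree $r$ in a graph-shift operator produce $r$-localized operators. The only minor subtlety worth stating explicitly is why the restriction $p,q\in\mathbb N^+$ (imposed in Section \ref{sec:method2} precisely for fast computation) is needed: it is exactly what guarantees $\beta^*_{p,q}$ is a polynomial rather than merely analytic, so that the degree bound $p+q$ is meaningful. If one wanted a more quantitative statement, the same derivation would yield a closed form for $[\mathcal W_{p,q}]_{ij}$ in terms of the number of walks of each length $\leq p+q$ between $v_i$ and $v_j$, but for the locality claim the vanishing argument above suffices.
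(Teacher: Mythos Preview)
Your proof is correct and follows essentially the same approach as the paper: the paper simply invokes Lemma~5.2 of \cite{hammond2011wavelets} for the fact that $\bm L^{n}\delta_i[j]=0$ whenever $d_{\mathcal G}(v_i,v_j)>n$, and then concludes from $\mathcal W_{p,q}$ being a $(p+q)$-order polynomial in $\bm L$. You have merely supplied a direct inductive proof of that cited lemma rather than citing it, which makes your argument self-contained but otherwise identical in strategy.
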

Invoking Lemma 5.2 in \cite{hammond2011wavelets}, we have $\bm L^n\delta_i[j]=0$ when $d_{\mathcal{G}}(v_i,v_j)>n$.
Because $\mathcal W_{p,q}$ is just a $C$-order polynomial of $\bm L$, we complete the proof. Proposition \ref{prop:4} shows that the Beta graph wavelet also has a good spatial locality.  Propositions \ref{prop:3} and \ref{prop:4} indicate that a larger $C$ can provide a better spectral locality at the expense of a worse spatial locality and vice versa.

\subsection{Beta Wavelet Graph Neural Network}\label{sec:method3}
Based on the introduced Beta graph wavelet, we propose the Beta Wavelet Graph Neural Network (BWGNN) for graph-based anomaly detection.

Different from GNN models~\cite{Kipf} in which multiple layers are stacked in a cascade fashion, BWGNN uses different wavelet kernels in parallel and then aggregates the corresponding filtering results. Specifically, BWGNN adopts the following propagation process:
\begin{align*}
    \bm Z_i &= \mathcal W_{i,C-i}(\text{MLP}(\bm X))\\
    \bm H &= \text{AGG}([\bm Z_0, \bm Z_1, \cdots, \bm Z_C]),
\end{align*}
where $\text{MLP}(\cdot)$ denotes a multi-layer perceptron  and $\text{AGG}(\cdot)$ can be a simple aggregation function such as summation or concatenation. $\mathcal W_{i,C-i}$ from \eqref{eq:method_wavelets} denotes our wavelet kernels. The aggregated representation $\bm H$ is then fed to another MLP with the Sigmoid function to compute the abnormal probability $p_i$. Weighted cross-entropy loss is used for the training of BWGNN:
\begin{align}\label{eq:method_loss}
    \mathcal L&=\sum_i (\gamma y_i\log(p_i) + (1- y_i)\log(1-p_i)),
\end{align}
where $\gamma$ is the ratio of anomaly labels ($y_i=1$) to normal labels ($y_i=0$).

The complexity of BWGNN is $O(C|\mathcal{E}|)$, as $\beta^*_{p,q}(L)$ is a polynomial function that can be computed recursively \cite{defferrard2016convolutional}.

\subsection{Discussion}\label{sec:method4}

We discuss the differences between the proposed BWGNN and other related wavelet GNNs. Heat kernel-based methods \cite{GWNN, wavelet_jure, gdnli} do not essentially satisfy Hammond's graph wavelet theory \cite{hammond2011wavelets} because they are not band-pass. To remedy this issue, BWGNN extends Hammond's graph wavelet to the learnable GNN framework and, for the first time, utilizes Beta distribution as the kernel function to generate graph wavelets. On another front, \cite{Scattering, Diffusion1, Diffusion2} adopt the idea of diffusion wavelets \cite{coifman2006diffusion} to construct graph neural networks.
Compared with them, the proposed Beta kernel can better handle higher frequency anomalies via multiple flexible, localized, and band-pass filters. 

\begin{table*}[t]
\caption{Experimental results of all compared methods on YelpChi and Amazon with 1\% and 40\% training ratios. }
\vskip 0.1in
\centering
  \begin{tabular}{r|cc|cc|cc|cc}
    \toprule
    Dataset& \multicolumn{2}{c|}{YelpChi (1\%)} & \multicolumn{2}{c|}{YelpChi (40\%)} & \multicolumn{2}{c|}{Amazon (1\%)} & \multicolumn{2}{c}{Amazon (40\%)}\\
    
    Metric & F1-macro & AUC & F1-macro & AUC & F1-macro & AUC & F1-macro & AUC\\
    \midrule
    MLP             & 53.90 & 59.83 & 57.57 & 66.52 & 74.68 & 83.62 & 79.17 & 89.80\\
    SVM             & 60.47 & 62.92 & 70.77 & 70.37 & 83.49 & 81.62 & 90.71 & 90.51\\ \midrule
    GCN             & 52.48 & 54.06 & 54.31 & 56.51 & 67.93 & 82.85 & 67.47 & 83.49\\
    ChebyNet        & 63.13 & 73.48 & 65.72 & 78.19 & 85.74 & 87.60 & 91.94 & 94.64\\
    GAT             & 50.27 & 50.95 & 54.64 & 57.20 & 60.84 & 73.45 & 83.18 & 89.90\\
    GIN             & 57.57 & 64.73 & 62.85 & 74.09 & 68.69 & 78.83 & 69.26 & 80.56\\
    GraphSAGE       & 58.41 & 67.58 & 65.49 & 78.31 & 70.78 & 75.37 & 74.17 & 86.95\\
    GWNN            & 59.10 & 67.16 & 65.29 & 75.32 & 87.01 & 85.37 & 91.00 & 93.19\\\midrule
    GraphConsis     & 56.79 & 66.41 & 58.70 & 69.83 & 68.59 & 74.11 & 75.12 & 87.41\\
    CAREGNN         & 62.18 & 75.07 & 63.32 & 76.19 & 68.78 & 88.69 & 86.39 & 90.53\\
    PC-GNN          & 59.82 & 75.47 & 63.00 & 79.87 & 79.86 & \textbf{90.40} & 89.56 & 95.86\\
    \midrule
    BWGNN (Homo)   & 61.15 & 72.01 & 71.00 & 84.03& \textbf{90.92} & 89.45 & \textbf{92.29} & \textbf{98.06}\\ 
    BWGNN (Hetero) & \textbf{67.02} & \textbf{76.95} & \textbf{76.96} & \textbf{90.54} & 83.83 & 86.59 & 91.72 & 97.42\\ 
    \bottomrule
    
\end{tabular}
\label{tab:exp1}
\end{table*}

\vspace{-0.5\baselineskip}
\section{Experiments}

\subsection{Experimental Setup}\label{sec:exp1}

\noindent\textbf{Datasets.}
We conduct experiments on four datasets introduced in Table \ref{tab:data}. The \textbf{YelpChi} dataset~\cite{rayana2015collective} aims to find the anomalous reviews which unjustly promote or demote certain products or businesses on Yelp.com. There are three edge types in the graph, including R-U-R (the reviews posted by the same user), R-S-R (the reviews under the same product with the same star rating), and R-T-R (the reviews under the same product posted in the same month). The \textbf{Amazon} dataset~\cite{mcauley2013amateurs} aims to find the anomalous users paid to write fake product reviews under the Musical Instrument category on Amazon.com. There are also three relations: U-P-U (users reviewing at least one same product), U-S-U (users having at least one same star rating within one week), and U-V-U (users with top-5\% mutual review similarities).

Furthermore, we release two real-world datasets \textbf{T-Finance} and \textbf{T-Social}. The \textbf{T-Finance} dataset aims to find the anomaly accounts in transaction networks. The nodes are unique anonymized accounts with 10-dimension features related to registration days, logging activities and interaction frequency. The edges in the graph represent two accounts that have transaction records. Human experts annotate nodes as anomalies if they fall into categories like fraud, money laundering and online gambling. The \textbf{T-Social} dataset aims to find the anomaly accounts in social networks. It has the same node annotations and features as T-Finance, while two nodes are connected if they maintain the friend relationship for more than three months. The size of T-Social is 100 times larger than that of YelpChi and Amazon.

\noindent\textbf{Metrics.} We choose two widely used metrics to measure the performance of all the methods, namely \textbf{F1-macro} and \textbf{AUC}. \textbf{F1-macro} is the unweighted mean of the F1-score of two classes, which neglects the imbalance ratio between normal and anomaly labels. \textbf{AUC}~\cite{davis2006relationship} is the area under the ROC Curve.

\noindent\textbf{Baselines and Implementation Details.}
We compare BWGNN with three groups of baselines. The first group only considers node features and includes \textbf{MLP} and \textbf{SVM}~\cite{chang2011libsvm}. The second group is general GNN models, including \textbf{GCN}~\cite{Kipf}, \textbf{ChebyNet}~\cite{defferrard2016convolutional}, \textbf{GAT}~\cite{velivckovic2017graph}, \textbf{GIN}~\cite{xu2018powerful}, \textbf{GraphSAGE}~\cite{hamilton2017inductive}, and  \textbf{GWNN}~\cite{GWNN}. The third group is state-of-the-art methods for graph-based anomaly detection, including \textbf{GraphConsis}~\cite{liu2020alleviating}, \textbf{CAREGNN}~\cite{dou2020enhancing} and \textbf{PC-GNN}~\cite{liu2021pick}. For the detailed baseline description, we refer the reader to Appendix \ref{sec:append4}.

Since the graphs in YelpChi and Amazon are multi-relational, we introduce two ways of dealing with heterogeneity in BWGNN. The first way is to treat all types of edges as the same. The second way is to perform graph propagation \eqref{eq:method_wavelets} for each relation  separately  and add a maximum pooling after that. We denote the first way as \textbf{BWGNN (homo)} and the second way as \textbf{BWGNN (hetero)}. 

We train all models except SVM for 100 epochs by Adam optimizer with a learning rate of 0.01, and save the model with the best Macro-F1 in validation. On the YelpChi, Amazon, and T-Finance datasets, the dimension $h$ for representations and hidden states in all models are set to 64, and the order $C$ in BWGNN is 2. We use concatenation as the $\text{AGG}(\cdot)$ function in BWGNN. The training ratio is 40\% in the supervised scenario and 1\% in the semi-supervised scenario, while the remaining data are split by 1:2 for validation and test. On the T-Social dataset, $h$ is set to 64, $C$ is set to 5, the supervised training ratio is 40\%, and the semi-supervised training ratio is 0.01\% (with only 17 labeled anomalies). The ratio of validation and test sets is 1:2. We report the average value and standard deviation of 10 runs on YelpChi and Amazon. On T-Finance and T-Social, we report the average value of 5 runs with different random seeds. Please refer to Appendix \ref{sec:append4} for more implementation details.

\begin{table*}[t]
\caption{Experimental results and the overall training time (seconds) on the T-Finance and T-Social datasets with different training ratios.}
\vskip 0.1in
\centering
  \resizebox{\linewidth}{!}{
  \begin{tabular}{r|cc|ccc|cc|ccc}
    \toprule
    Dataset & \multicolumn{2}{c|}{T-Finance (1\%)} & \multicolumn{3}{c|}{T-Finance (40\%)} & \multicolumn{2}{c|}{T-Social (0.01\%)} & \multicolumn{3}{c}{T-Social (40\%)} \\  
    Metric& F1-macro & AUC & F1-macro & AUC & Time & F1-macro & AUC & F1-macro & AUC & Time \\
    \midrule
    MLP         & 61.00 & 82.93 & 70.57 & 87.15 & 13.32 & 50.03 & 56.35 & 50.35 & 56.96 & 986 \\
    SVM         & 67.69 & 71.47 & 76.23 & 78.16 & 145.11 & 57.69 & 50.06 & - & - & $>$1 day  \\ 
    \midrule
    GCN         & 54.11 & 57.30 & 70.74 & 64.43 & 23.98 & 49.23 & 59.04 & 59.88 & 87.35 & 1294\\
    ChebyNet    & 77.20 & 85.53 & 80.81 & 88.45 & 26.13 & 52.59 & 70.02 & 64.77 & 85.52 & 1711\\
    GAT         & 53.15 & 52.04 & 53.86 & 73.00 & 181.62& 46.25 & 44.35 & 69.01 & 89.06 & 1596\\
    GIN         & 58.25 & 68.86 & 65.23 & 80.02 & 32.39 & 58.32 & 70.61 & 61.74 & 79.72 & 2195\\
    GraphSAGE   & 59.03 & 66.35 & 52.71 & 67.12 & 35.91 & 57.91 & 59.69 & 59.77 & 70.80 & 2230\\
    GWNN        & 70.64 & 86.68 & 71.58 & 86.57 & 27.25 & 50.81 & 56.14 & 58.72 & 73.77 & 1992\\ 
    \midrule
    GraphConsis & 71.73 & 90.28 & 73.46 & 91.42 & 264.41 & 52.45 & 65.29 & 56.55 & 71.25 & 3495\\
    CAREGNN     & 73.32 & 90.50 & 77.55 & 92.16 & 572.41 & 55.82 & 71.20 & 56.26 & 71.86 & 9159\\
    PC-GNN      & 62.06 & 90.76 & 63.18 & 91.23 & 736.55 & 51.14 & 59.84 & 52.17 & 68.45 & 13958\\
    \midrule
    BWGNN       & \textbf{84.89} & \textbf{91.15} & \textbf{86.87} & \textbf{94.35} & 31.98 & \textbf{75.93} & \textbf{88.06} & \textbf{83.98} & \textbf{95.20} & 2707\\
  \bottomrule
\end{tabular}
}
\label{tab:exp2}
\end{table*}

\subsection{Performance Comparison}
\vspace{0.6mm}

The results are reported in Table~\ref{tab:exp1} and Table~\ref{tab:exp2} respectively. The standard deviation is in Appendix \ref{sec:append5}. In general, BWGNN achieves the best performance in all datasets except Amazon (1\%), where PC-GNN obtains the best AUC score. For two datasets with multi-relation graphs, BWGNN (Hetero) performs better on YelpChi while BWGNN (Homo) is better on Amazon. 

GraphConsis, CAREGNN, and PC-GNN are three state-of-the-art methods for graph-based anomaly detection, while BWGNN outperforms them significantly with a much shorter training time. For example, on YelpChi (40\%), BWGNN has 13.9\% and 10.6\% absolutely improvement in F1-Macro and AUC respectively when compared with PC-GNN. The performance improvement is even more significant on T-Social, proving both the superiority and the scalability of BWGNN on large graphs. 

Among general GNN models (GCN, ChebyNet, GAT, GIN, GraphSAGE, and GWNN), GCN performs worst in most cases. It is consistent with our analysis that the low-pass filter is insufficient in distinguishing anomalies. Conversely, ChebyNet performs better because the learnable Chebyshev kernel in it can act as a band-pass filter.

Even though graph structure is ignored, MLP and SVM can also achieve comparable performance in some datasets. For instance, SVM outperforms many GNN approaches on Amazon. However, the performance is still lower than BWGNN, suggesting that neighborhood information is still important in anomaly detection, but needs to be used properly.

\begin{figure}[t]
\centering
  \includegraphics[width=1.0\columnwidth]{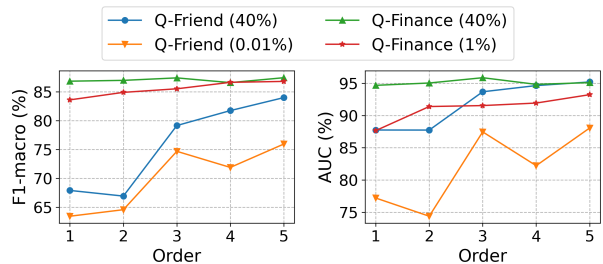}
\vspace{-6mm}
\caption{The performance of BWGNN on T-Finance and T-Social with different order $C$.}
\label{fig:exp1}
\end{figure}

\subsection{Sensitivity Analysis}

\noindent\textbf{The Order $C$ in BWGNN.}
The order $C$ is a crucial hyper-parameter in BWGNN, as Beta wavelet is a $C$-order polynomial of $L$ and is localized in $C$-hops of each node according to proposition~\ref{prop:4}.  Figure~\ref{fig:exp1} presents the F1-macro and AUC scores of BWGNN on two datasets when varying  $C$ from 1 to 5. On T-Social, higher $C$ leads to better performances, while on T-Finance, there are no significant differences in results for $C \geq 2$. One possible reason is that the graph in T-Social is much more sparse than that in T-Finance according to Table~\ref{tab:data}. Thus, a larger range of neighborhoods is required for T-Social.

\begin{figure}[h!]
\centering
  \includegraphics[width=1.0\columnwidth]{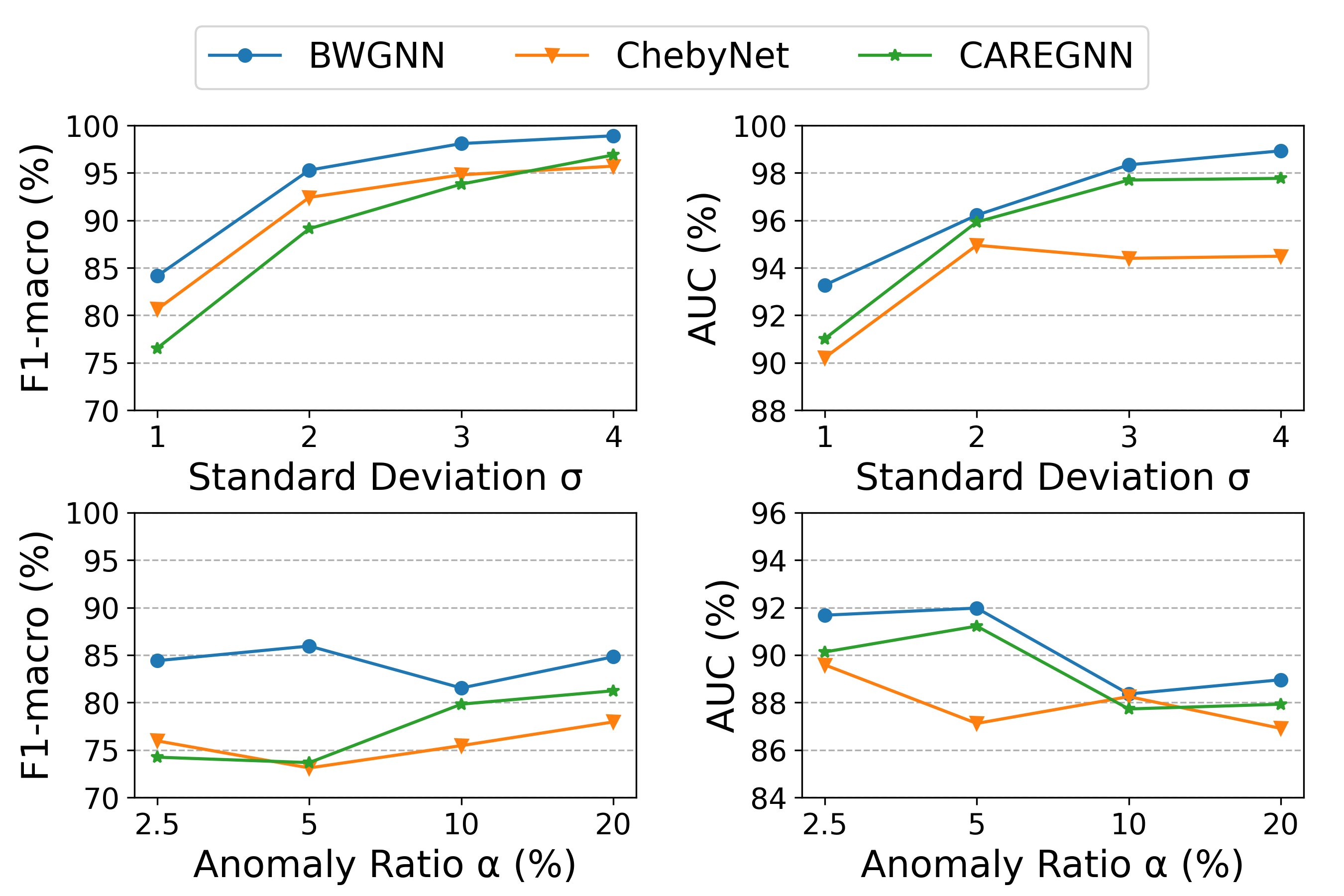}
\vspace{-6mm}
\caption{Comparison of performance on T-Finance (1\%) with different anomaly degrees controlled by $\sigma$ and $\alpha$.}
\vspace{-5mm}

\label{fig:exp2}
\end{figure}

\noindent\textbf{Impact of Anomaly Degree.}
We evaluate the effect of different anomaly degrees on ChebyNet, CAREGNN and our BWGNN using the T-Finance dataset. We consider two variables in Section \ref{sec:analysis2}, including the standard deviation $\sigma$ and the fraction $\alpha$ of the anomalies. We keep the mean value and scale the  $\sigma$ of anomalous node features. To control $\alpha$, we follow the attribute perturbation schema in \cite{dominant}. We generate different fractions of anomalies by replacing normal node attributes with anomalous ones.

Figure~\ref{fig:exp2} compares the F1-macro and AUC scores of BWGNN, ChebyNet, and CAREGNN on T-Finance (1\%) with different anomaly degrees. When $\sigma$ increases, all three models perform better as the anomalies are more distinguishable. Among them, BWGNN is the fastest-growing method and reaches 99\% F1-macro at $\sigma=4$. When varying $\alpha$, BWGNN consistently outperforms other methods and is robust to different anomaly degrees.

\section{Conclusion}
This work presents a novel analysis of graph anomalies in the spectral domain. We find that graph anomalies lead to the `right-shift' phenomenon of spectral energy distributions and further rigorously justify the observation on a vanilla probabilistic model. Inspired by this fact, we propose Beta Wavelet Graph Neural network (BWGNN) to better capture anomaly information on graph. BWGNN leverages Beta graph wavelet to generate band-pass filters with good locality in spectral and spatial domains. Empirical results on four datasets show the superiority and scalability of our model.

\section*{Acknowledgements}

The work described in this paper was supported by grants from HKUST(GZ) under a Startup Grant and HKUST-GZU Joint Research Collaboration Fund (Project No.: GZU22EG05).

\bibliography{ref}

\begin{thebibliography}{56}
\providecommand{\natexlab}[1]{#1}
\providecommand{\url}[1]{\texttt{#1}}
\expandafter\ifx\csname urlstyle\endcsname\relax
  \providecommand{\doi}[1]{doi: #1}\else
  \providecommand{\doi}{doi: \begingroup \urlstyle{rm}\Url}\fi

\bibitem[Amar et~al.(2005)Amar, Zaied, and Alimi]{amar2005beta}
Amar, C.~B., Zaied, M., and Alimi, A.
\newblock Beta wavelets. synthesis and application to lossy image compression.
\newblock \emph{Advances in Engineering Software}, 36\penalty0 (7):\penalty0
  459--474, 2005.

\bibitem[Balcilar et~al.(2020)Balcilar, Renton, H{\'e}roux, Ga{\"u}z{\`e}re,
  Adam, and Honeine]{balcilar2020analyzing}
Balcilar, M., Renton, G., H{\'e}roux, P., Ga{\"u}z{\`e}re, B., Adam, S., and
  Honeine, P.
\newblock Analyzing the expressive power of graph neural networks in a spectral
  perspective.
\newblock In \emph{ICLR}, 2020.

\bibitem[Bandyopadhyay et~al.(2019)Bandyopadhyay, Lokesh, and Murty]{ONE}
Bandyopadhyay, S., Lokesh, N., and Murty, M.~N.
\newblock Outlier aware network embedding for attributed networks.
\newblock In \emph{{AAAI}}, 2019.

\bibitem[Bao et~al.(2019)Bao, Tang, Li, and Zhang]{anomaly_health}
Bao, Y., Tang, Z., Li, H., and Zhang, Y.
\newblock Computer vision and deep learning--based data anomaly detection
  method for structural health monitoring.
\newblock \emph{Structural Health Monitoring}, 18\penalty0 (2):\penalty0
  401--421, 2019.

\bibitem[Bedeian \& Mossholder(2000)Bedeian and Mossholder]{bedeian2000use}
Bedeian, A.~G. and Mossholder, K.~W.
\newblock On the use of the coefficient of variation as a measure of diversity.
\newblock \emph{Organizational Research Methods}, 3\penalty0 (3):\penalty0
  285--297, 2000.

\bibitem[Bo et~al.(2021)Bo, Wang, Shi, and Shen]{bo2021beyond}
Bo, D., Wang, X., Shi, C., and Shen, H.
\newblock Beyond low-frequency information in graph convolutional networks.
\newblock In \emph{AAAI}, 2021.

\bibitem[Chang \& Lin(2011)Chang and Lin]{chang2011libsvm}
Chang, C.-C. and Lin, C.-J.
\newblock Libsvm: a library for support vector machines.
\newblock \emph{ACM transactions on intelligent systems and technology (TIST)},
  2\penalty0 (3):\penalty0 1--27, 2011.

\bibitem[Coifman \& Maggioni(2006)Coifman and Maggioni]{coifman2006diffusion}
Coifman, R.~R. and Maggioni, M.
\newblock Diffusion wavelets.
\newblock \emph{Applied and computational harmonic analysis}, 21\penalty0
  (1):\penalty0 53--94, 2006.

\bibitem[Cui et~al.(2020)Cui, Seo, Tabar, Ma, Wang, and Lee]{cui2020deterrent}
Cui, L., Seo, H., Tabar, M., Ma, F., Wang, S., and Lee, D.
\newblock Deterrent: Knowledge guided graph attention network for detecting
  healthcare misinformation.
\newblock In \emph{KDD}, pp.\  492--502, 2020.

\bibitem[Davis \& Goadrich(2006)Davis and Goadrich]{davis2006relationship}
Davis, J. and Goadrich, M.
\newblock The relationship between precision-recall and roc curves.
\newblock In \emph{ICML}, pp.\  233--240, 2006.

\bibitem[De~Oliveira \& De~Ara{\'u}jo(2015)De~Oliveira and
  De~Ara{\'u}jo]{de2015compactly}
De~Oliveira, H. and De~Ara{\'u}jo, G.
\newblock Compactly supported one-cyclic wavelets derived from beta
  distributions.
\newblock \emph{arXiv:1502.02166}, 2015.

\bibitem[Defferrard et~al.(2016)Defferrard, Bresson, and
  Vandergheynst]{defferrard2016convolutional}
Defferrard, M., Bresson, X., and Vandergheynst, P.
\newblock Convolutional neural networks on graphs with fast localized spectral
  filtering.
\newblock \emph{NeurIPS}, pp.\  3844--3852, 2016.

\bibitem[Ding et~al.(2019)Ding, Li, Bhanushali, and Liu]{dominant}
Ding, K., Li, J., Bhanushali, R., and Liu, H.
\newblock Deep anomaly detection on attributed networks.
\newblock In \emph{ICDM}. SIAM, 2019.

\bibitem[Dong et~al.(2021)Dong, Ding, Jalaian, Ji, and Li]{dong2021adagnn}
Dong, Y., Ding, K., Jalaian, B., Ji, S., and Li, J.
\newblock Adagnn: Graph neural networks with adaptive frequency response
  filter.
\newblock In \emph{CIKM}, pp.\  392--401, 2021.

\bibitem[Donnat et~al.(2018)Donnat, Zitnik, Hallac, and Leskovec]{wavelet_jure}
Donnat, C., Zitnik, M., Hallac, D., and Leskovec, J.
\newblock Learning structural node embeddings via diffusion wavelets.
\newblock In \emph{KDD}, 2018.

\bibitem[Dou et~al.(2020)Dou, Liu, Sun, Deng, Peng, and Yu]{dou2020enhancing}
Dou, Y., Liu, Z., Sun, L., Deng, Y., Peng, H., and Yu, P.~S.
\newblock Enhancing graph neural network-based fraud detectors against
  camouflaged fraudsters.
\newblock In \emph{CIKM}, pp.\  315--324, 2020.

\bibitem[ElAdel et~al.(2016)ElAdel, Zaied, and Amar]{eladel2016fast}
ElAdel, A., Zaied, M., and Amar, C.~B.
\newblock Fast beta wavelet network-based feature extraction for image copy
  detection.
\newblock \emph{Neurocomputing}, 173:\penalty0 306--316, 2016.

\bibitem[Gama et~al.(2019)Gama, Ribeiro, and Bruna]{Diffusion1}
Gama, F., Ribeiro, A., and Bruna, J.
\newblock Diffusion scattering transforms on graphs.
\newblock In \emph{{ICLR}}, 2019.

\bibitem[Grubbs(1969)]{grubbs1969procedures}
Grubbs, F.~E.
\newblock Procedures for detecting outlying observations in samples.
\newblock \emph{Technometrics}, 11\penalty0 (1):\penalty0 1--21, 1969.

\bibitem[Hamilton et~al.(2017)Hamilton, Ying, and
  Leskovec]{hamilton2017inductive}
Hamilton, W.~L., Ying, R., and Leskovec, J.
\newblock Inductive representation learning on large graphs.
\newblock In \emph{NeurIPS}, pp.\  1025--1035, 2017.

\bibitem[Hammond et~al.(2011)Hammond, Vandergheynst, and
  Gribonval]{hammond2011wavelets}
Hammond, D.~K., Vandergheynst, P., and Gribonval, R.
\newblock Wavelets on graphs via spectral graph theory.
\newblock \emph{Applied and Computational Harmonic Analysis}, 30\penalty0
  (2):\penalty0 129--150, 2011.

\bibitem[Han et~al.(2011)Han, Kamber, and Pei]{textbook}
Han, J., Kamber, M., and Pei, J.
\newblock \emph{Data Mining: Concepts and Techniques, 3rd edition}.
\newblock Morgan Kaufmann, 2011.

\bibitem[He et~al.(2021)He, Wei, Huang, and Xu]{he2021bernnet}
He, M., Wei, Z., Huang, Z., and Xu, H.
\newblock Bernnet: Learning arbitrary graph spectral filters via bernstein
  approximation.
\newblock \emph{NeurIPS}, 2021.

\bibitem[Jemai et~al.(2010)Jemai, Zaied, Amar, and Alimi]{jemai2010fbwn}
Jemai, O., Zaied, M., Amar, C.~B., and Alimi, A.~M.
\newblock Fbwn: An architecture of fast beta wavelet networks for image
  classification.
\newblock In \emph{{IJCNN}}, pp.\  1--8. IEEE, 2010.

\bibitem[Kendall et~al.(1946)]{kendall1946advanced}
Kendall, M.~G. et~al.
\newblock The advanced theory of statistics.
\newblock \emph{The advanced theory of statistics.}, 1946.

\bibitem[Kipf \& Welling(2017)Kipf and Welling]{Kipf}
Kipf, T.~N. and Welling, M.
\newblock Semi-supervised classification with graph convolutional networks.
\newblock In \emph{{ICLR}}, 2017.

\bibitem[Kumar et~al.(2018)Kumar, Hooi, Makhija, Kumar, Faloutsos, and
  Subrahmanian]{kumar2018rev2}
Kumar, S., Hooi, B., Makhija, D., Kumar, M., Faloutsos, C., and Subrahmanian,
  V.
\newblock Rev2: Fraudulent user prediction in rating platforms.
\newblock In \emph{WSDM}, pp.\  333--341, 2018.

\bibitem[Levie et~al.(2018)Levie, Monti, Bresson, and
  Bronstein]{levie2018cayleynets}
Levie, R., Monti, F., Bresson, X., and Bronstein, M.~M.
\newblock Cayleynets: Graph convolutional neural networks with complex rational
  spectral filters.
\newblock \emph{IEEE Transactions on Signal Processing}, 67\penalty0
  (1):\penalty0 97--109, 2018.

\bibitem[Li et~al.(2021)Li, Li, Liu, Yu, Li, and Cheng]{gdnli}
Li, J., Li, J., Liu, Y., Yu, J., Li, Y., and Cheng, H.
\newblock Deconvolutional networks on graph data.
\newblock In \emph{NeurIPS}, volume~34, pp.\  21019--21030, 2021.

\bibitem[Li et~al.(2018)Li, Han, and Wu]{li2018in}
Li, Q., Han, Z., and Wu, X.
\newblock Deeper insights into graph convolutional networks for semi-supervised
  learning.
\newblock In \emph{AAAI}, pp.\  3538--3545, 2018.

\bibitem[Liu et~al.(2021{\natexlab{a}})Liu, Sun, Ao, Feng, He, and
  Yang]{liu2021intention}
Liu, C., Sun, L., Ao, X., Feng, J., He, Q., and Yang, H.
\newblock Intention-aware heterogeneous graph attention networks for fraud
  transactions detection.
\newblock In \emph{KDD}, pp.\  3280--3288, 2021{\natexlab{a}}.

\bibitem[Liu et~al.(2021{\natexlab{b}})Liu, Ao, Qin, Chi, Feng, Yang, and
  He]{liu2021pick}
Liu, Y., Ao, X., Qin, Z., Chi, J., Feng, J., Yang, H., and He, Q.
\newblock Pick and choose: A gnn-based imbalanced learning approach for fraud
  detection.
\newblock In \emph{Proceedings of the Web Conference 2021}, pp.\  3168--3177,
  2021{\natexlab{b}}.

\bibitem[Liu et~al.(2020)Liu, Dou, Yu, Deng, and Peng]{liu2020alleviating}
Liu, Z., Dou, Y., Yu, P.~S., Deng, Y., and Peng, H.
\newblock Alleviating the inconsistency problem of applying graph neural
  network to fraud detection.
\newblock In \emph{SIGIR}, pp.\  1569--1572, 2020.

\bibitem[Ma et~al.(2021)Ma, Wu, Xue, Yang, Zhou, Sheng, Xiong, and
  Akoglu]{anomaly_survey}
Ma, X., Wu, J., Xue, S., Yang, J., Zhou, C., Sheng, Q.~Z., Xiong, H., and
  Akoglu, L.
\newblock A comprehensive survey on graph anomaly detection with deep learning.
\newblock \emph{IEEE Transactions on Knowledge and Data Engineering}, 2021.

\bibitem[McAuley \& Leskovec(2013)McAuley and Leskovec]{mcauley2013amateurs}
McAuley, J.~J. and Leskovec, J.
\newblock From amateurs to connoisseurs: modeling the evolution of user
  expertise through online reviews.
\newblock In \emph{WWW}, pp.\  897--908, 2013.

\bibitem[Min et~al.(2020)Min, Wenkel, and Wolf]{Scattering}
Min, Y., Wenkel, F., and Wolf, G.
\newblock Scattering {GCN:} overcoming oversmoothness in graph convolutional
  networks.
\newblock In \emph{NeurIPS}, 2020.

\bibitem[Min et~al.(2021)Min, Wenkel, and Wolf]{Diffusion2}
Min, Y., Wenkel, F., and Wolf, G.
\newblock Geometric scattering attention networks.
\newblock In \emph{{ICASSP}}, 2021.

\bibitem[Ngai et~al.(2011)Ngai, Hu, Wong, Chen, and Sun]{anomaly_financial}
Ngai, E.~W., Hu, Y., Wong, Y.~H., Chen, Y., and Sun, X.
\newblock The application of data mining techniques in financial fraud
  detection: A classification framework and an academic review of literature.
\newblock \emph{Decision support systems}, 50\penalty0 (3):\penalty0 559--569,
  2011.

\bibitem[Noble \& Cook(2003)Noble and Cook]{noble2003graph}
Noble, C.~C. and Cook, D.~J.
\newblock Graph-based anomaly detection.
\newblock In \emph{KDD}, pp.\  631--636, 2003.

\bibitem[Nt \& Maehara(2019)Nt and Maehara]{nt2019revisiting}
Nt, H. and Maehara, T.
\newblock Revisiting graph neural networks: All we have is low-pass filters.
\newblock \emph{arXiv:1905.09550}, 2019.

\bibitem[Paszke et~al.(2019)Paszke, Gross, Massa, Lerer, Bradbury, Chanan,
  Killeen, Lin, Gimelshein, Antiga, et~al.]{paszke2019pytorch}
Paszke, A., Gross, S., Massa, F., Lerer, A., Bradbury, J., Chanan, G., Killeen,
  T., Lin, Z., Gimelshein, N., Antiga, L., et~al.
\newblock Pytorch: An imperative style, high-performance deep learning library.
\newblock \emph{NeurIPS}, 32:\penalty0 8026--8037, 2019.

\bibitem[Pedregosa et~al.(2011)Pedregosa, Varoquaux, Gramfort, Michel, Thirion,
  Grisel, Blondel, Prettenhofer, Weiss, Dubourg, et~al.]{pedregosa2011scikit}
Pedregosa, F., Varoquaux, G., Gramfort, A., Michel, V., Thirion, B., Grisel,
  O., Blondel, M., Prettenhofer, P., Weiss, R., Dubourg, V., et~al.
\newblock Scikit-learn: Machine learning in python.
\newblock \emph{the Journal of machine Learning research}, 12:\penalty0
  2825--2830, 2011.

\bibitem[Perraudin et~al.(2014)Perraudin, Paratte, Shuman, Martin, Kalofolias,
  Vandergheynst, and Hammond]{perraudin2014gspbox}
Perraudin, N., Paratte, J., Shuman, D., Martin, L., Kalofolias, V.,
  Vandergheynst, P., and Hammond, D.~K.
\newblock Gspbox: A toolbox for signal processing on graphs.
\newblock \emph{arXiv preprint arXiv:1408.5781}, 2014.

\bibitem[Rayana \& Akoglu(2015)Rayana and Akoglu]{rayana2015collective}
Rayana, S. and Akoglu, L.
\newblock Collective opinion spam detection: Bridging review networks and
  metadata.
\newblock In \emph{KDD}, pp.\  985--994, 2015.

\bibitem[Sipple(2020)]{sipple2020interpretable}
Sipple, J.
\newblock Interpretable, multidimensional, multimodal anomaly detection with
  negative sampling for detection of device failure.
\newblock In \emph{ICML}, pp.\  9016--9025. PMLR, 2020.

\bibitem[Spielman(2007)]{spielman2007spectral}
Spielman, D.~A.
\newblock Spectral graph theory and its applications.
\newblock In \emph{48th Annual IEEE Symposium on Foundations of Computer
  Science (FOCS'07)}, pp.\  29--38. IEEE, 2007.

\bibitem[Ten et~al.(2011)Ten, Hong, and Liu]{anomaly_cyber}
Ten, C.-W., Hong, J., and Liu, C.-C.
\newblock Anomaly detection for cybersecurity of the substations.
\newblock \emph{IEEE Transactions on Smart Grid}, 2\penalty0 (4):\penalty0
  865--873, 2011.

\bibitem[Veli{\v{c}}kovi{\'c} et~al.(2017)Veli{\v{c}}kovi{\'c}, Cucurull,
  Casanova, Romero, Lio, and Bengio]{velivckovic2017graph}
Veli{\v{c}}kovi{\'c}, P., Cucurull, G., Casanova, A., Romero, A., Lio, P., and
  Bengio, Y.
\newblock Graph attention networks.
\newblock \emph{arXiv:1710.10903}, 2017.

\bibitem[Wang et~al.(2019{\natexlab{a}})Wang, Lin, Cui, Jia, Wang, Fang, Yu,
  Zhou, Yang, and Qi]{wang2019semi}
Wang, D., Lin, J., Cui, P., Jia, Q., Wang, Z., Fang, Y., Yu, Q., Zhou, J.,
  Yang, S., and Qi, Y.
\newblock A semi-supervised graph attentive network for financial fraud
  detection.
\newblock In \emph{ICDM}, pp.\  598--607. IEEE, 2019{\natexlab{a}}.

\bibitem[Wang et~al.(2019{\natexlab{b}})Wang, Zheng, Ye, Gan, Li, Song, Zhou,
  Ma, Yu, Gai, Xiao, He, Karypis, Li, and Zhang]{wang2019dgl}
Wang, M., Zheng, D., Ye, Z., Gan, Q., Li, M., Song, X., Zhou, J., Ma, C., Yu,
  L., Gai, Y., Xiao, T., He, T., Karypis, G., Li, J., and Zhang, Z.
\newblock Deep graph library: A graph-centric, highly-performant package for
  graph neural networks.
\newblock \emph{arXiv:1909.01315}, 2019{\natexlab{b}}.

\bibitem[Wu et~al.(2019)Wu, Souza, Zhang, Fifty, Yu, and
  Weinberger]{wu2019simplifying}
Wu, F., Souza, A., Zhang, T., Fifty, C., Yu, T., and Weinberger, K.
\newblock Simplifying graph convolutional networks.
\newblock In \emph{ICML}, pp.\  6861--6871, 2019.

\bibitem[Wu et~al.(2021)Wu, Pan, Long, Jiang, and Zhang]{wu2021beyond}
Wu, Z., Pan, S., Long, G., Jiang, J., and Zhang, C.
\newblock Beyond low-pass filtering: Graph convolutional networks with
  automatic filtering.
\newblock \emph{arXiv preprint arXiv:2107.04755}, 2021.

\bibitem[Xu et~al.(2019{\natexlab{a}})Xu, Shen, Cao, Qiu, and Cheng]{GWNN}
Xu, B., Shen, H., Cao, Q., Qiu, Y., and Cheng, X.
\newblock Graph wavelet neural network.
\newblock In \emph{{ICLR}}, 2019{\natexlab{a}}.

\bibitem[Xu et~al.(2019{\natexlab{b}})Xu, Hu, Leskovec, and
  Jegelka]{xu2018powerful}
Xu, K., Hu, W., Leskovec, J., and Jegelka, S.
\newblock How powerful are graph neural networks?
\newblock \emph{ICLR}, 2019{\natexlab{b}}.

\bibitem[Zhao et~al.(2020)Zhao, Deng, Yu, Jiang, Wang, and
  Jiang]{zhao2020error}
Zhao, T., Deng, C., Yu, K., Jiang, T., Wang, D., and Jiang, M.
\newblock Error-bounded graph anomaly loss for gnns.
\newblock In \emph{CIKM}, pp.\  1873--1882, 2020.

\bibitem[Zhao et~al.(2021)Zhao, Jiang, Shah, and Jiang]{zhao2021synergistic}
Zhao, T., Jiang, T., Shah, N., and Jiang, M.
\newblock A synergistic approach for graph anomaly detection with pattern
  mining and feature learning.
\newblock \emph{IEEE Transactions on Neural Networks and Learning Systems},
  2021.

\end{thebibliography}
\bibliographystyle{icml2022}

\appendix

\newpage
\onecolumn 

\icmltitle{Supplementary of ``Rethinking Graph Neural Networks for Anomaly Detection''}
\section{Proof of Proposition 2} \label{sec:append1}

\begin{proof}
By the rotation invariance of Gaussian distributions, 
\[\bm x \sim \mathcal{N}(\mu e_N, \sigma^2 \bm I_N) \Rightarrow \bm{\hat x} = \bm U^T \bm x \sim \mathcal{N}(\mu \bm U^T e_N, \sigma^2 \bm I_N).\] 
As the all-the-one vector is the eigenvector for $\lambda_1 = 0$, we have $\hat x_1 \sim \mathcal{N}(\mu\sqrt{n}, \sigma^2)$ and $\hat x_i \sim \mathcal{N}(0, \sigma^2), \forall i\neq 1$. 
For simplicity, we denote $z_i = \frac{\hat x_i}{\sigma}$ then $z_1 \sim \mathcal{N}(\tfrac{\mu\sqrt{N}}{\sigma}, 1)$ and $z_i \sim \mathcal{N}(0, 1), \forall i\neq 1$. 
\begin{align*}
 \mathbb{E}_{\bm x}\left[\frac{1}{\eta_k(\bm x,\bm L)}\right]-1 &= \mathbb{E}\left[\frac{\sum_{i=k+1}^N \hat x_i^2}{\sum_{i=1}^k \hat x_i^2}\right]\\
 &=\mathbb{E}\left[\frac{\sum_{i=k+1}^N z_i^2}{\sum_{i=1}^k z_i^2}\right]\\
    &= \mathbb{E}\left[\sum_{i=k+1}^N z_i^2\right] \mathbb{E}\left[\frac{1}{\sum_{i=1}^k z_i^2}\right] \\
    & = (N-k) \mathbb{E}\left[\frac{1}{z_i^2 + \sum_{i=2}^k z_i^2}\right].
\end{align*}
Notice that $\sum_{i=2}^k z_i^2  \sim \tilde{\chi}^2(k-1)$ follows the chi-square distribution which is also independent with $z_1$. Denote $ w = \sum_{i=2}^k z_i^2 \ge 0$ and $\rho= \frac{\mu\sqrt{n}}{\sigma}$, we have
\begin{align*}
    \mathbb{E}_{\bm x}\left[\frac{1}{\eta_k(\bm x,\bm L)}\right]-1 & \propto \mathbb{E}_{ w} \left[ \mathbb{E}_{z_1}\left[\frac{1}{z_1^2+w}\right]\right] \\
&=  \mathbb{E}_{ w}\left[\int_{-\infty}^{+\infty} \frac{1}{\sqrt{2 \pi}} e^{-\frac{1}{2}(t-\rho)^{2}} \frac{1}{t^{2}+w} d t\right] \\
&\propto \mathbb{E}_{ w}\left[\int_{-\infty}^{+\infty} e^{-\frac{1}{2}(t-\rho)^{2}} \frac{1}{t^{2}+w} d t \right].
\end{align*}
As $w$ is nonnegative, we just have to focus on the 
monotonicity of $f(\rho) = \int_{-\infty}^{+\infty} e^{-\frac{1}{2}(t-\rho)^{2}} \frac{1}{t^{2}+w} d t$ with respect to $\rho$. Due to Lemma \ref{lm:log-cav}, we can conclude that $f(\rho)$ is log-concave function and the maximum attains at $0$. Hence, the expectation of the inverse of low-frequency energy ratio $\mathbb{E}_{\bm x}[1/\eta_k(\bm x,\bm L)]$ is monotonically increasing with the anomaly degree $ \frac{\sigma}{|\mu|}$. 
\end{proof}

\begin{lemma}\label{lm:log-cav}
If $w$ is nonnegative, $f(\rho) = \int_{-\infty}^{+\infty} e^{-\frac{1}{2}(t-\rho)^{2}} \frac{1}{t^{2}+w} d t$ is a log-concave function and the maximum attains at 0. 
\end{lemma}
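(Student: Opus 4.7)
The function $f$ is even in $\rho$: substituting $t\mapsto -t$ transforms $(t-\rho)^2$ into $(t+\rho)^2$ while leaving $t^2+w$ and $dt$ invariant, so $f(-\rho)=f(\rho)$. In particular $f'(0)=0$ by symmetry, and to pin down the global maximum at $0$ it suffices to show $f$ is strictly decreasing on $(0,\infty)$. Differentiating under the integral, substituting $u=t-\rho$, and folding the ranges $u>0$ and $u<0$, I obtain
\begin{equation*}
f'(\rho)=\int_0^\infty u\,e^{-u^2/2}\!\left[\frac{1}{(u+\rho)^2+w}-\frac{1}{(\rho-u)^2+w}\right]du,
\end{equation*}
which is strictly negative for $\rho>0$ because $(u+\rho)^2>(\rho-u)^2$. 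Combined with evenness this settles the ``maximum at $0$'' half of the statement.

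For log-concavity my plan is to expose $f$ as a Laplace transform of a positive measure on a bounded interval. Using $\tfrac{1}{t^2+w}=\int_0^\infty e^{-s(t^2+w)}\,ds$, swapping the order of integration, completing the square in the inner Gaussian integral in $t$, and then substituting $u=s/(1+2s)\in[0,\tfrac12)$ yields
\begin{equation*}
f(\rho)=\sqrt{2\pi}\int_0^{1/2} m(u)\,e^{-u\rho^2}\,du, \qquad m(u)=(1-2u)^{-3/2}e^{-uw/(1-2u)}\ge 0.
\end{equation*}
Each ingredient $e^{-u\rho^2}$ is log-concave in $\rho$, but since log-concavity is not preserved under continuous mixtures this representation alone is not conclusive. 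A short Gaussian integration-by-parts computation of $f''(\rho)$ then gives the clean reformulation
\begin{equation*}
(\log f)''(\rho)=\operatorname{Var}_{\nu_\rho}(T)-1, \qquad \nu_\rho(t)\propto \frac{e^{-(t-\rho)^2/2}}{t^2+w},
\end{equation*}
so the claim reduces to the sharp variance bound $\operatorname{Var}_{\nu_\rho}(T)\le 1$: reweighting $\mathcal N(\rho,1)$ by the symmetric bell $1/(t^2+w)$ should not enlarge its spread.

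This variance inequality is the principal obstacle. At $\rho=0$ it follows from a Chebyshev/FKG argument, since $t^2$ is increasing and $1/(t^2+w)$ is decreasing in $|t|$, so tilting the standard Gaussian by $1/(t^2+w)$ can only reduce $\mathbb{E}[T^2]$. For general $\rho$ the symmetry breaks: the log-potential $V(t)=(t-\rho)^2/2+\log(t^2+w)$ has $V''(t)=1+2(w-t^2)/(t^2+w)^2$, which dips below $1$ on $|t|>\sqrt{w}$, so a direct Brascamp--Lieb bound is not tight enough. I would try to close this gap either (i) via an asymmetric covariance inequality under $\mathcal N(\rho,1)$ after recentering, comparing $(T-\mathbb{E}T)^2$ with the unimodal bell $1/(T^2+w)$, or (ii) via the Laplace representation above, reducing to a moment comparison on the bounded interval $[0,\tfrac12]$ and exploiting $\operatorname{Var}(U)\le \tfrac12\mathbb{E}[U]$ to control the ``convex'' contribution $4\rho^2\operatorname{Var}(U)$ to $(\log f)''$. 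This final variance step is where I expect essentially all of the technical difficulty to sit.
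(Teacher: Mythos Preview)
Your argument for the maximum at $\rho=0$ is correct and in fact stronger than the paper's: you establish strict monotonicity of $f$ on $(0,\infty)$ directly from the sign of the folded integrand for $f'(\rho)$, whereas the paper only checks $f'(0)=0$ by the same symmetry and then appeals to log-concavity to upgrade this critical point to a global maximum. For the downstream application in Proposition~\ref{prop:1} only this monotonicity is actually needed, so your first paragraph already suffices for that.

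For log-concavity itself your reduction to $\operatorname{Var}_{\nu_\rho}(T)\le 1$ is correct, but you do not close it, and the two routes you sketch both run into the obstruction you identify (the potential $V$ is not uniformly $1$-convex, so Brascamp--Lieb is too weak). The paper takes an entirely different and much shorter route that you do not consider: it invokes the Pr\'ekopa marginalization rule---if $h(\rho,t)$ is jointly log-concave then $\int h(\rho,t)\,dt$ is log-concave in $\rho$---applied to $h(\rho,t)=e^{-(t-\rho)^2/2}/(t^2+w)$, together with log-concavity of the Gaussian factor. That is the ``missing idea'' relative to your attempt. It is worth noting, however, that your own Hessian calculation exposes a gap in the paper's shortcut as written: the Cauchy-type factor $1/(t^2+w)$ has second log-derivative $2(t^2-w)/(t^2+w)^2>0$ for $|t|>\sqrt{w}$, so $h$ is \emph{not} jointly log-concave in $(\rho,t)$ and Pr\'ekopa does not apply directly. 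Thus neither your proposal nor the paper's printed argument fully establishes log-concavity; what the paper actually requires downstream is only the monotonicity on $(0,\infty)$, which you have proved cleanly.
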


\begin{proof}
By the following two log-concave preserving rules, it is not hard to get our result. 
\begin{itemize}
\item The product of log-concave functions is still a log-concave function. \item If $f:\mathbb{R}^n \times \mathbb{R}^n \rightarrow \mathbb{R}$ is a log-concave function, then 
$
g(x) = \int f(x,y) dy
$
is still a log-concave function. 
\end{itemize}
Combining with the fact that the density function of Gaussian distributions is log-concave, it is easy to get $f(\rho)$ is log-concave. We omitted the details here. The next step is to argue the optimal solution. As $f(\rho)$ is differentiable, we can interchange the partial and integral here, that is, 
\[
\frac{\partial f(\rho)}{\partial \rho} = \int_{-\infty}^{+\infty} \frac{1}{t^{2}+w} e^{-\frac{1}{2}(t-\rho)^{2}}(t-\rho) dt. 
\]
Then, 
\begin{align*}
\frac{\partial f(\rho)}{\partial \rho}|_{\rho = 0} & = \int_{0}^{+\infty} \frac{t}{t^{2}+w} e^{-\frac{1}{2}t^{2}} dt + \int_{-\infty}^{0} \frac{t}{t^{2}+w} e^{-\frac{1}{2}t^{2}} dt \\
& = \int_{0}^{+\infty} \frac{t}{t^{2}+w} e^{-\frac{1}{2}t^{2}} dt +\int_{0}^{+\infty}\frac{-t}{t^{2}+w} e^{-\frac{1}{2}t^{2}} dt  = 0.
\end{align*}
\end{proof}

\section{Proof of Equation \eqref{eq:high_freq}} \label{sec:append2}
\begin{proof}
Note that we have $\Sigma_{i=1}^N \hat x_i^2 = \bm{x^Tx}$ and $\Sigma_{i=1}^N \lambda_i\hat x_i^2 = \bm{x^TLx}$ according to spectral graph theory \cite{spielman2007spectral}. The detailed derivation of Equation \eqref{eq:high_freq} is:
\begin{equation*}
    S_{\textnormal{high}}
    =\int_0^{\lambda_N}1-f(t)dt
    =\lambda_N - \sum_{i=1}^N(\lambda_i-\lambda_{i-1})\eta_{i-1}
    =\sum_{i=1}^N\lambda_i(\eta_{i}-\eta_{i-1})
    =\frac{\sum_{i=1}^N\lambda_i \hat x^2_i}{\sum_{i=1}^N \hat x_i^2}
    =\frac{\bm{x^TLx}}{\bm{x^Tx}}
\end{equation*}

\end{proof}



\section{Baselines and Implementation Details} \label{sec:append4}
The first group only considers node features without graph relations:
\begin{itemize}[itemsep=2pt,topsep=0pt,parsep=0pt]
    \item \textbf{MLP}: a multi-layer perceptron network consisting of two linear layers with activation functions.
    \item \textbf{SVM}:~\cite{chang2011libsvm}: a support vector machine with the Radial Basis Function (RBF) kernel.
\end{itemize}
The second group is general GNN models for node classification:
\begin{itemize}[itemsep=2pt,topsep=0pt,parsep=0pt]
    \item \textbf{GCN}~\cite{Kipf}: a graph convolutional network using the first-order approximation of localized spectral filters on graphs.
    \item \textbf{ChebyNet}~\cite{defferrard2016convolutional}: a graph convolutional network which restricts convolution kernel to a Chebyshev polynomial.
    \item \textbf{GAT}~\cite{velivckovic2017graph}: a graph attention network that employs the attention mechanism for neighbor aggregation.
    \item \textbf{GIN}~\cite{xu2018powerful}, a GNN model connecting to Weisfeiler-Lehman (WL) graph isomorphism test.
    \item \textbf{GraphSAGE}~\cite{hamilton2017inductive}: a GNN model based on a fixed sample number of the neighbor nodes.
    \item \textbf{GWNN}~\cite{GWNN}: a graph wavelet neural network using heat kernels to generate wavelet transforms.
\end{itemize}
The third group is state-of-the-art methods for graph-based anomaly detection:
\begin{itemize}[itemsep=2pt,topsep=0pt,parsep=0pt]
    \item \textbf{GraphConsis}~\cite{liu2020alleviating}: a heterogeneous graph neural network which tackles context, feature and relation inconsistency problem in graph anomaly detection.
    \item \textbf{CAREGNN}~\cite{dou2020enhancing}: a camouflage-resistant GNN which enhances the aggregation process with three unique modules against camouflages and reinforcement learning.
    \item \textbf{PC-GNN}~\cite{liu2021pick}: a GNN-based imbalanced learning method to solve the class imbalance problem in graph-based fraud detection via resampling. 
\end{itemize}

MLP is implemented by PyTorch \cite{paszke2019pytorch}, and SVM is in Scikit-learn~\cite{pedregosa2011scikit}. For GCN, ChebyNet, GAT, and GraphSAGE, we use the implementation of DGL\footnote{\url{https://github.com/dmlc/dgl}}. GWNN is implemented by ourselves since the source code does not support the fast algorithm.  For GraphConsis\footnote{\url{https://github.com/safe-graph/DGFraud}}, CAREGNN\footnote{\url{https://github.com/YingtongDou/CARE-GNN}}, and PC-GNN\footnote{\url{https://github.com/PonderLY/PC-GNN}}, we use the code provided by the authors. Our model is implemented based on PyTorch and DGL~\cite{wang2019dgl}. We conduct all the experiments on a high performance computing server running Ubuntu 20.04 with a Intel(R)  Xeon(R) Gold 6226R CPU and 64GB memory. 


As the first and second groups of baselines are not specifically designed for anomaly detection, they suffer from class imbalance and produce very few positive predictions. For fair comparisons, in training, we use the weighted cross-entropy Equation \eqref{eq:method_loss} in BWGNN. In validation, we search for the best binary classification threshold by adjusting it at 0.05 intervals to achieve the best F1-macro.

\section{Additional Experimental Results} \label{sec:append5}

In Table \ref{tab:exp3}, we report the average value and standard deviation of 10 runs on YelpChi and Amazon.

\begin{table*}[h]
\caption{Experimental results (Mean $\pm$ Std.) of compared methods on the YelpChi and Amazon datasets with 1\% and 40\% training ratios.}
\vskip 0.1in
  \resizebox{\linewidth}{!}{
  \begin{tabular}{r|cc|cc|cc|cc}
    \toprule
    Dataset& \multicolumn{2}{c|}{YelpChi (1\%)} & \multicolumn{2}{c|}{YelpChi (40\%)} & \multicolumn{2}{c|}{Amazon (1\%)} & \multicolumn{2}{c}{Amazon (40\%)}\\
    
    Metric & F1-macro & AUC & F1-macro & AUC & F1-macro & AUC & F1-macro & AUC\\
    \midrule
    MLP             & 53.90$\pm$\small{0.23} & 59.83$\pm$\small{0.40} & 57.57$\pm$\small{0.89} & 66.52$\pm$\small{1.09} & 74.68$\pm$\small{1.25} & 83.62$\pm$\small{1.76} & 79.17$\pm$\small{1.26} & 89.80$\pm$\small{1.04}\\
    SVM             & 60.47$\pm$\small{0.24} & 62.92$\pm$\small{0.92} & 70.77$\pm$\small{0.01} & 70.37$\pm$\small{0.04} & 83.49$\pm$\small{1.39} & 81.62$\pm$\small{3.53} & 90.71$\pm$\small{0.04} & 90.51$\pm$\small{0.07}\\ \midrule
    GCN             & 52.48$\pm$\small{0.50} & 54.06$\pm$\small{0.72} & 54.31$\pm$\small{0.77} & 56.51$\pm$\small{1.09} & 67.93$\pm$\small{1.42} & 82.85$\pm$\small{0.71} & 67.47$\pm$\small{0.52} & 83.49$\pm$\small{0.47}\\
    ChebyNet        & 63.13$\pm$\small{0.50} & 73.48$\pm$\small{0.74} & 65.72$\pm$\small{0.48} & 78.19$\pm$\small{0.63} & 85.74$\pm$\small{1.67} & 87.60$\pm$\small{0.61} & 91.94$\pm$\small{0.29} & 94.64$\pm$\small{0.53}\\
    GAT             & 50.27$\pm$\small{2.31} & 50.95$\pm$\small{1.39} & 54.64$\pm$\small{2.19} & 57.20$\pm$\small{0.24} & 60.84$\pm$\small{2.47} & 73.45$\pm$\small{1.26} & 83.18$\pm$\small{2.91} & 89.90$\pm$\small{0.95}\\
    GIN             & 57.57$\pm$\small{1.15} & 64.73$\pm$\small{1.73} & 62.85$\pm$\small{0.76} & 74.09$\pm$\small{1.06} & 68.69$\pm$\small{4.12} & 78.83$\pm$\small{3.82} & 69.26$\pm$\small{2.45} & 80.56$\pm$\small{2.99}\\
    GraphSAGE       & 58.41$\pm$\small{2.12} & 67.58$\pm$\small{1.69} & 65.49$\pm$\small{1.84} & 78.31$\pm$\small{2.14} & 70.78$\pm$\small{3.85} & 75.37$\pm$\small{2.49} & 74.17$\pm$\small{1.37} & 86.95$\pm$\small{2.74}\\
    GWNN            & 59.10$\pm$\small{6.53} & 67.16$\pm$\small{11.44} & 65.29$\pm$\small{6.67} & 75.32$\pm$\small{8.97} & 87.01$\pm$\small{1.98} & 85.37$\pm$\small{2.32} & 91.00$\pm$\small{0.27} & 93.19$\pm$\small{2.22}\\\midrule
    GraphConsis     & 56.79$\pm$\small{2.72} & 66.41$\pm$\small{3.41} & 58.70$\pm$\small{2.00} & 69.83$\pm$\small{3.02} & 68.59$\pm$\small{3.41} & 74.11$\pm$\small{3.53} & 75.12$\pm$\small{3.25} & 87.41$\pm$\small{3.34}\\
    CAREGNN         & 62.18$\pm$\small{1.39} & 75.07$\pm$\small{3.88} & 63.32$\pm$\small{0.94} & 76.19$\pm$\small{2.92} & 68.78$\pm$\small{1.68} & 88.69$\pm$\small{3.58} & 86.39$\pm$\small{1.73} & 90.53$\pm$\small{1.67}\\
    PC-GNN          & 59.82$\pm$\small{1.42} & 75.47$\pm$\small{0.98} & 63.00$\pm$\small{2.30} & 79.87$\pm$\small{0.14} & 79.86$\pm$\small{5.65} & \textbf{90.40$\pm$\small{2.05}} & 89.56$\pm$\small{0.77} & 95.86$\pm$\small{0.14}\\
    \midrule
    BWGNN (Homo)   & 61.15$\pm$\small{0.41} & 72.01$\pm$\small{0.48} & 71.00$\pm$\small{0.91} & 84.03$\pm$\small{0.98} & \textbf{90.92$\pm$\small{0.78}} & 89.45$\pm$\small{0.33} & \textbf{92.29$\pm$\small{0.44}} & \textbf{98.06$\pm$\small{0.45}}\\ 
    BWGNN (Hetero) & \textbf{67.02$\pm$\small{0.50}} & \textbf{76.95$\pm$\small{1.38}} & \textbf{76.96$\pm$\small{0.89}} & \textbf{90.54$\pm$\small{0.49}} & 83.83$\pm$\small{3.79} & 86.59$\pm$\small{2.62} & 91.72$\pm$\small{0.84} & 97.42$\pm$\small{0.48}\\ 
    \bottomrule
\end{tabular}
}
\label{tab:exp3}
\end{table*}


\end{document}